\newcommand\blfootnotea[1]{%
  \begingroup
  \renewcommand\thefootnote{}\footnote{#1}%
  \endgroup
}
\newcommand\numberthis{\addtocounter{equation}{1}\tag{\theequation}} %
\newcommand{\twopartdef}[4]
{
\left\{
\begin{array}{ll}
#1 & \mbox{if } #2 \\
#3 & \mbox{if } #4
\end{array}
\right.
}
\newcommand{\R}{\mathbb{R}}
\newcommand{\viol}{v_{\mathsf{RoS}}}
\newcommand{\xt}{x_t}
\newcommand{\bt}{b_t}
\newcommand{\vt}{v_t}
\newcommand{\pt}{p_t}
\newcommand{\gt}{\gamma_t}
\newcommand{\reg}{\mathrm{Regret}}
\newcommand{\E}{\mathbb{E}}
\newcommand{\rew}{\mathrm{Reward}}
\newcommand{\opt}{\mathsf{Opt}}
\newcommand{\Prob}{\mathrm{Prob}}
\newcommand{\gseq}{\overrightarrow{\gamma}}
\newcommand{\err}{\mathrm{Err}}
\newcommand{\tcparatio}{\mathsf{RoS}}
\global\long\def\1{\mathbf{1}}%
\global\long\def\vt{v_{t}}%
\global\long\def\xt{x_{t}}%
\global\long\def\pt{p_{t}}%
\global\long\def\gt{g_t}
\global\long\def\bt{b_{t}}%
\global\long\def\mut{\mu_{t}}%
\global\long\def\0{\mathbf{0}}%
\global\long\def\fts{f_{t, \mathsf{RoS}}^{\star}}%
\global\long\def\ftscombined{f_{t, \mathsf{combined}}^{\star}}
\global\long\def\lamt{\lambda_{t}}%
\global\long\def\mub{\overline{\mu}_{\tau}}%
\global\long\def\lamb{\overline{\lambda}_{\tau}}%
\global\long\def\reg{\mathsf{Regret}}%
\global\long\def\rew{\mathsf{Reward}}%
\global\long\def\alg{\mathsf{Alg}}%
\global\long\def\err{\mathsf{Err}}%
\global\long\def\bard{\overline{\mathcal{D}}_{\mathsf{RoS}}}%
\global\long\def\bardcombined{\overline{\mathcal{D}}_{\mathsf{combined}}}%
\global\long\def\calp{\mathcal{P}}%
\global\long\def\E{\mathbb{E}}%
\global\long\def\fs{f^{\star}}%
\global\long\def\mup{\mu^{\prime}}%
\global\long\def\lamp{\lambda^{\prime}}%
\global\long\def\stm{\sigma_{t-1}}%
\declaretheoremstyle[
  headfont=\bfseries,
  bodyfont=\normalfont\itshape,
]{mydefinition}
\theoremstyle{plain}
\newtheorem{thm}{\protect\theoremname}
\newtheorem{theorem}{Theorem} 
\theoremstyle{plain}
\newtheorem{lem}[thm]{\protect\lemmaname}
\theoremstyle{plain}
\theoremstyle{plain}
\theoremstyle{plain}
\newtheorem{assumption}[thm]{\protect\assumptionname}
\theoremstyle{remark}
\newtheorem{corollary}{Corollary}
\theoremstyle{mydefinition}
\newtheorem{definition}{Definition}
\newtheorem{proposition}{Proposition}
\providecommand{\lemmaname}{Lemma}
\providecommand{\remarkname}{Remark}
\providecommand{\assumptionname}{Assumption}
\numberwithin{equation}{section}
\numberwithin{theorem}{section}
\numberwithin{fact}{section}
\numberwithin{equation}{section}
\numberwithin{definition}{section}
\numberwithin{lem}{section}
\numberwithin{remark}{section}
\numberwithin{corollary}{section}
\numberwithin{assumption}{section}
\numberwithin{algorithm}{section}
\numberwithin{proposition}{section}
\crefname{prob}{Problem}{Problems}
\crefname{fact}{Fact}{Facts}
\crefname{sec}{Section}{Sections}
\crefname{app}{Appendix}{Appendices}
\crefname{equation}{Equation}{Equations}
\crefname{lem}{Lemma}{Lemmas}
\crefname{rem}{Remark}{Remarks}
\crefname{lemma}{Lemma}{Lemmas}
\crefname{defn}{Definition}{Definitions}
\crefname{cor}{Corollary}{Corollaries}
\crefname{prop}{Proposition}{Propositions}
\crefname{ineq}{Inequality}{Inequalities}
\crefname{alg}{Algorithm}{Algorithms}
\crefname{assumption}{Assumption}{Assumptions}
\title{Online Bidding Algorithms for \\
Return-on-Spend Constrained Advertisers\blfootnotea{Author names are listed in alphabetical order.}}
\author{
Zhe Feng \\ Google Research \\ \texttt{zhef@google.com} 
\and 
Swati Padmanabhan\thanks{Work done as a student researcher in the Market Algorithms team at Google Research.} \\ University of Washington, Seattle \\ \texttt{pswati@uw.edu}
\and
Di Wang \\ Google Research \\ \texttt{wadi@google.com}
}
\date{}
\begin{document}
\maketitle

\begin{abstract}
Online advertising has recently grown into a highly competitive and complex multi-billion-dollar industry, with advertisers bidding for ad slots at large scales and high frequencies. This has resulted in a growing need for efficient ``auto-bidding'' algorithms that determine the bids for incoming queries to maximize advertisers' targets subject to their specified constraints. This work explores efficient online algorithms for a single value-maximizing advertiser under an increasingly popular constraint: Return-on-Spend (RoS). We quantify efficiency in terms of \emph{regret} relative to the optimal algorithm, which knows all  queries a priori. 

We contribute a simple online algorithm that achieves near-optimal regret in expectation while always respecting the specified RoS constraint when the input sequence of queries are i.i.d.\ samples from some distribution. We also integrate our results with the previous work of Balseiro, Lu, and Mirrokni~\cite{BLM20} to achieve near-optimal regret while respecting \emph{both} RoS and fixed budget constraints. 

Our algorithm follows the primal-dual framework and uses online mirror descent (OMD) for the dual updates. However, we need to use a non-canonical setup of OMD, and therefore the classic low-regret guarantee of OMD, which is for the adversarial setting in online learning, no longer holds. Nonetheless, in our case and more generally where low-regret dynamics are applied in algorithm design, the gradients encountered by OMD can be far from adversarial but influenced by our algorithmic choices. We exploit this key insight to show our OMD setup achieves low regret in the realm of our algorithm.

\end{abstract}

\newpage

\section{Introduction}\label{sec:intro}
With the explosive growth of online advertising into a billion-dollar industry\footnote{See \href{https://www.insiderintelligence.com/insights/programmatic-digital-display-ad-spending/}{eMarketer}}, auto-bidding --- the practice of using optimization algorithms to generate bids for ad slots on behalf of advertisers ---  has emerged as a  predominant tool in online advertising~\cite{aggarwal2019autobidding, balseiro2019learning, babaioff2021non, golrezaei2021bidding, deng2021towards,  balseiro2021robust, balseiro2021landscape}. Unlike manual CPC (``cost-per-click'') bidding, which requires advertisers to manually update bids for new search queries, auto-bidding requires advertisers to specify only their high-level objectives and constraints. The advertising platform then deploys its auto-bidding agent, which, based on its underlying optimization algorithms, transforms these inputs into fine-grained bids. Thus, designing an efficient bidding algorithm for advertisers to achieve their targets under constraints constitutes a central problem in the auto-bidding domain.

Throughout this paper, we focus on the return-on-spend (RoS) constrained bidding problem for a single learner (auto-bidder). In this problem, the RoS constraint requires that the ratio of total value of the advertiser to its total payment exceed at least some specified target ratio. In practice, the RoS constraint may capture other similar constraints, e.g., target cost-per-acquisition (tCPA) and target return-on-ad-spend (tROAS).\footnote{See \href{https://support.google.com/google-ads/answer/2979071?hl=en}{Google ads support page}} Additionally, we investigate the well-studied total budget constraint, which specifies an upper bound on the auto-bidder's total expenditure; our proposed algorithm, though tailored to the RoS constraint, easily adapts to the budget constraint, as well.

We study the online bidding algorithm for a single auto-bidder (learner) in the stochastic setting: In each round, an ad query (request) and auction are generated i.i.d.\ from an unknown distribution, after which the learner submits a bid to compete for the ad query in the auction. Given all bids for this query, the auction mechanism specifies which advertiser wins the opportunity to show its ad to the user and how much it needs to pay. Before placing its bid, the learner  observes only the value of this ad query; after placing the bid, it  receives its bid's auction outcome (i.e., the allocation and payment). From the perspective of auto-bidding, the platform needs to design an online bidding algorithm for the learner to maximize its target subject to the RoS and budget constraints. 

For theoretical simplicity, we focus only on the value-maximizing auto-bidder, i.e., the learner aims to maximize the total realized value (or conversions) for $T$ i.i.d.\ randomly drawn ad queries subject to RoS and budget constraints. Our results can be easily extended to handle other types of auto-bidders (e.g., utility maximization or a hybrid of value maximization and utility maximization~\cite{balseiro2021robust}).

\subsection{Results and Techniques}
Our main result is as follows. 
\begin{theorem}[Informal version; see \cref{lem:combinedRegretBothStrict}]
We provide an algorithm (\cref{alg:combined-truthful-strict}) for value maximization under RoS and budget constraints. For a $T$-length input i.i.d.\ sequence of ad queries, our algorithm provably attains $O(\sqrt{T}\log T)$ regret while respecting both the budget and RoS constraints. 
\end{theorem}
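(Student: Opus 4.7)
The plan is to cast value maximization under RoS and budget constraints as a Lagrangian saddle-point problem with one dual multiplier $\lamt$ for the RoS constraint and one multiplier $\mut$ for the budget constraint. At round $t$, given the observed value and a posted multiplier pair $(\lamt,\mut)$, the algorithm posts a bid that greedily maximizes the current Lagrangian $v_t\, x_t - \lamt(\tcparatio\cdot p_t\, x_t - v_t\, x_t) - \mut\, p_t\, x_t$ over the feasible single-round primal. A standard weak-duality argument then shows that the expected per-round primal suboptimality of this bid equals the per-round dual regret of the $(\lamt,\mut)$ iterates, plus i.i.d.\ stochastic fluctuations around the distributional expectation that concentrate at rate $O(\sqrt{T})$ by a Hoeffding/Azuma argument.

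The next step is to analyze the OMD updates on $(\lamt,\mut)$. In the non-canonical setup flagged in the abstract (likely a mirror map tailored to the ratio structure of RoS, on a domain whose radius grows like $\log T$ to cover the worst-case optimal dual), an adversarial-gradient OMD bound would give $\sqrt{T}$ times the dual radius, which after scaling by $\log T$ yields $O(\sqrt{T}\log T)$. Because the gradient at round $t$, namely $(v_t-\tcparatio\, p_t,\, p_t)$ evaluated at the algorithm's own bid, is not truly adversarial but is influenced by $(\lamt,\mut)$ themselves, I would exploit a self-bounding / local-norm argument: the squared-gradient terms appearing in the OMD potential telescoping are bounded by quantities controlled by the iterates they produce, letting the argument close without the usual adversarial slack.

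For strict feasibility I would introduce a stopping time $\tau$ that triggers when either the running RoS or running spend ratio gets within a $O(\sqrt{T}\log T)/T$ buffer of its target. Before $\tau$, the algorithm runs OMD as above; after $\tau$, it plays a safe bid (zero, or exactly tight on the binding constraint). Using concentration of the OMD iterates around their fluid-limit averages, I would show the expected number of post-$\tau$ rounds is at most $O(\sqrt{T}\log T)$, so the primal value forgone on them is of the same order. Combined with the budget-stopping argument of~\cite{BLM20}, which the paper explicitly integrates with, this yields the advertised bound under \emph{both} constraints held strictly and not only in expectation.

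The main obstacle is the two-constraint interaction. With a single dual one can bound $\lamt$ by $O(\log T)$ and invoke a textbook OMD bound, but under joint RoS and budget updates the dual iterates can drift so that one gradient is large exactly when the other is small, breaking a naive coordinatewise analysis. Quantifying this coupling --- specifically, showing that a joint potential $\Phi(\lamt,\mut)$ stays uniformly bounded along the algorithm's trajectory --- is the heart of the argument, and is presumably where the non-adversarial OMD analysis advertised in the abstract is deployed. Once that uniform bound is in hand, summing the OMD regret, the martingale concentration terms, and the lost value from the safe rounds each at the scale $O(\sqrt{T}\log T)$ closes the proof.
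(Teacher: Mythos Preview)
Your high-level framework---two dual multipliers, a greedy Lagrangian bid, weak duality, and the observation that the OMD gradients are shaped by the algorithm's own choices---matches the paper. But two of the concrete mechanisms you propose are not what the paper does, and one of them has a genuine gap.

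\textbf{Strict RoS feasibility.} You propose a stopping time $\tau$ that fires when the running RoS slack nears its target, after which you bid safely and argue that $T-\tau$ is $O(\sqrt{T}\log T)$ in expectation. This does not work as stated. Unlike the remaining budget, the running RoS slack $\sum_{s\le t}(v_s x_s(b_s)-p_s(b_s))$ is not monotone and can dip toward (or below) zero almost immediately: with $\lambda_1=1,\mu_1=0$ the first bid is $2v_1$, so already $g_1(b_1)$ can be negative. Your $\tau$ could therefore be $1$, and you would forfeit $\Omega(T)$ value. The paper instead runs a \emph{pre-phase} in which it bids $b_t=v_t$---which in a truthful auction guarantees $g_t(b_t)\ge 0$---until a cumulative slack of $2\sqrt{T}\log T$ is banked; only then does it start the soft algorithm, whose worst-case RoS violation is provably at most that same amount (\cref{lem:constraint_violation}). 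Under \cref{assumption:beta} (the expected per-round slack $\beta$ when bidding the value is a constant), the pre-phase lasts $O(\sqrt{T}\log T)$ rounds in expectation, and this is precisely where the $\log T$ in the final regret originates.

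\textbf{The OMD analysis.} You propose restricting the dual to a domain of radius $O(\log T)$ and then invoking a textbook adversarial bound, and later a joint potential $\Phi(\lambda_t,\mu_t)$. The paper does neither. The RoS dual $\lambda_t$ is \emph{not} bounded---it can reach $T$ along the algorithm's trajectory---and the two duals are analyzed separately, not jointly. The key structural fact is that the bid formula $b_t=\tfrac{1+\lambda_t}{\lambda_t+\mu_t}v_t$ forces $g_t(b_t)\ge -1/\lambda_t$ deterministically. This gradient--iterate coupling, combined with the local strong convexity of the negative-entropy Bregman divergence $V_h(y,x)\ge (y-x)^2/(2\max(x,y))$, makes the per-step error $\alpha^2 g_t(b_t)^2\max(\lambda_t,\lambda_{t+1})$ telescope to $O(\sqrt{T})$ despite the unbounded $\lambda_t$ (\cref{lem:regret_bound}). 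The budget dual $\mu_t$, by contrast, stays bounded by $O(1/\rho)$ and is handled by the standard \cite{BLM20} argument; there is no two-constraint coupling obstacle of the kind you anticipate.
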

To the best of our knowledge,  ours is the first algorithm to attain near-optimal regret while satisfying \emph{both} budget and RoS constraints in any outcome. In doing so, we improve upon the prior work of \cite{BLM20}, which obtains similar guarantees under only budget constraints. Our result holds for i.i.d.\ input sequences and under an additional mild technical assumption on the input distribution.

\looseness=-1We build up to our result by first obtaining a $O(\sqrt{T})$-regret algorithm (\cref{alg:tcpa-truthful-soft}) for value maximization under \emph{approximate} RoS constraints (i.e., allowing for an at most $O(\sqrt{T}\log T)$ violation of this constraint). A simple modification to this algorithm yields  \cref{alg:meta-strictconstraints-tcpa}, which obtains a $O(\sqrt{T}\log T)$ regret guarantee under a \emph{strict} RoS constraint (\cref{thm:strictTcpaOnly}). We then combine \cref{alg:tcpa-truthful-soft} with ideas from \cite{BLM20} to design \cref{alg:combined-truthful-soft}, which attains a $O(\sqrt{T})$-regret under \emph{two} constraints: strict budget constraints and approximate  RoS constraints (with a $O(\sqrt{T}\log T)$ violation cap). Finally, unifying ideas from \cref{alg:meta-strictconstraints-tcpa} and  \cref{alg:combined-truthful-soft} yields our main result.  

Underlying all the algorithms we propose in this paper is a primal-dual framework similar to that used by~\cite{BLM20}. Such a framework allows our algorithms to adapt to the changing values and prices of input queries while respecting the advertisers' specified constraints and goals over the entire time horizon. In particular, the  dual variable (which tracks the constraint violation) is updated via online mirror descent (OMD) using the generalized cross-entropy function, which imposes a large (exponential) penalty on constraint violation.

An immediate technical challenge in attempting to use generalized cross-entropy as the mirror map is its lack of strong convexity on the non-negative orthant. While this is an issue in~\cite{BLM20} as well, in this case, the fixed budget constraint bounds the corresponding dual variable by a constant, which in turn implies the desired strong convexity on the space over which their algorithm operates. In our case (with the RoS constraint), there exist example inputs on which no such bound exists, which necessitates a novel analysis that circumvents the lack of strong convexity. 

Our key insight is that while the low-regret guarantee of OMD with a strongly convex mirror map holds even when the gradients seen by OMD are adversarial, when we use OMD in our algorithm to update the dual variable, the gradients used in the update are controlled by how our algorithm sets the primal variables (i.e., bids); this connection is sufficient to give our algorithm the low-regret guarantee despite not using a strongly convex mirror map. We use this connection in a white-box adaptation of the OMD analysis tailored to our algorithm, as well as  properties of the generalized cross-entropy function from  the (offline) positive linear programming literature~\cite{allen2014using}. 

As a final remark on our technique, to turn an algorithm that only \emph{approximately} satisfies a constraint to one that \emph{strictly} satisfies it, we propose a simple strategy: We first submit a sequence of bids that lets us accumulate a slack on the RoS constraint, followed by the existing algorithm, which suffers some bounded constraint violation (c.f.~\cref{sec:hard_constraint}). The first phase builds up enough slack (at the cost of bidding sub-optimally) to compensate for the constraint violation from the later iterations. This allows us to trade off the violation on the RoS constraint with the objective value. We anticipate  that this simple idea might be applicable in other contexts but note that such a trade-off is typically easier in offline optimization via retrospectively modifying the solution (e.g., by scaling or truncating).  

\subsection{Related Work}
Our problem falls under the broader umbrella of online optimization under stochastic time-varying constraints, and it has seen a long line of research by various research communities,  e.g.~\cite{mannor2009online,mahdavi2012trading, mahdavi2013stochastic, yu2017online, yu2020low,BLM20,agrawal2014fast,castiglioni2022unifying,badanidiyuru2018bandits, immorlica2019adversarial}. From a technical standpoint, our primal-dual approach is similar to that in~\cite{BLM20}; however, the RoS constraint is not a packing-type constraint as studied in their work as well as in~\cite{badanidiyuru2018bandits, immorlica2019adversarial}. There also exist papers that study a variant of our problem with a constraint class that contains as a special case our RoS constraint (e.g.~\cite{agrawal2014fast,castiglioni2022unifying}); however, these general techniques do not provide guarantees as strong as ours, as we elaborate next. 

For example, a recent work \cite{castiglioni2022unifying} gives a primal-dual framework using regret minimization, which, when adapted to our bidding problem under the RoS constraint, achieves $\tilde{O}(T^{3/4})$ regret with $\tilde{O}(T^{3/4})$ constraint violation (both with high probability). Both bounds are polynomially weaker than our guarantees (that further hold deterministically). Their bounds can be improved to $\tilde{O}(T^{1/2})$ under a `strictly feasible' assumption,  which is essentially~\cref{assumption:beta} in our context; in contrast, we guarantee \textit{strict} constraint satisfaction under that assumption. Moreover, their algorithm uses techniques from the multi-armed bandits literature, thus requiring the space of values and bids to be of finite size $n_v,n_b$ respectively, and their regret bound scales with $n_v\sqrt{n_b}$. Our algorithm works directly with continuous values and bids. 

Another example is~\cite{agrawal2014fast}, which considers general online optimization with convex constraints. This work uses black-box low-regret methods that rely on a globally strongly-convex regularizer over the dual space, and a sub-linear regret bound is attainable only when the dual space is well-bounded (e.g. a scaled simplex) or the dual variable can be projected onto such a space without incurring too much additional regret. This canonical approach turns out to be difficult for the RoS constraint, which can incur poor problem-specific parameters in generic guarantees. As a result, this technique cannot give sub-linear regret for the RoS constraint. To circumvent this issue, we rely on  problem-specific structure rather than globally strongly-convex regularization.

In the more specific domain of online bidding under constraints, a closely related work is~\cite{golrezaei2021bidding} which investigates the same problem we do but with the RoS and budget constraints holding \emph{only in expectation over the distribution}; in contrast, our constraint guarantees hold for any realization of samples. We note that \cite{golrezaei2021bidding} give an example where bidding based on the optimal offline (fixed) dual variable cannot achieve sub-linear regret. This does not contradict our results since our algorithm is adaptive rather than trying to converge to the optimal offline dual variables and then bid based on fixed dual variables afterwards. Our algorithm keeps updating our dual variables based on the previous outcomes, and this takes advantage of stochastic information to balance the objective and constraints violation. 

The problem  of \emph{learning to bid in repeated auctions} has been widely studied in both academia and industry, e.g.~\cite{borgs2007dynamics, weed2016online,Feng18,BFG21,Nedelec22,Noti21,HZFOW20}. These papers mainly abstract the problem of learning to bid as contextual bandits but do not incorporate constraints into them. Beyond this, there have been some work on bidding under budget constraints, e.g., \cite{balseiro2019learning, AWLZHD22}, however, these papers focus on utility-maximizing agents with at most one constraint. \cite{CCRKS22} also considers multiple different constraints in online bidding algorithms, however, they directly add a regularizer of one non-packing constraint in the objective and apply the standard dual mirror descent approach to design the algorithms. Their regret bound is measured against this relaxed objective, whereas ours is relative to the adaptive optimal benchmark.

Finally, loosely related work includes the AdWords problem~\cite{mehta2007adwords, devanur2009adwords}, which focuses  on budget management for multiple bidders to maximize the seller's revenue; in contrast, we focus on the design of online bidding algorithms for a single auto-bidder with RoS and budget constraints.

\section{Preliminaries}\label{sec:prelimsNotation}

We consider an online bidding model for a single learner (auto-bidder): At each time step $t$, nature stochastically generates for the learner an ad query associated with a value $v_t \in [0, 1]$  and an auction mechanism $(x_t, p_t)$, where $x_t: \R_{\geq 0} \mapsto [0, 1]$ is an allocation function and $\pt: \R_{\geq 0} \mapsto [0, 1]$ the expected payment rule.\footnote{Our algorithm's regret guarantee  depends linearly on the scales of the valuation and payment, and the assumed bounds on these quantities are for theoretical simplicity.} We assume the following stochastic model: $(v_t, x_t, p_t)$ are drawn independently and identically (i.i.d.) from an unknown distribution. At each time step $t$, the value $\vt$ is known to the learner before making a bid, and the learner decides its bid $b_t$ given $v_t$ and historical information. At the end of time step $t$, the learner observes the realized outcome from the auction mechanism, i.e., $x_t(b_t)$ and $p_t(b_t)$.

We focus only on auctions that are truthful. This requires that the allocation function $x_t(b)$ be non-decreasing with the input bid $b$ and the payment function $p_t$ be characterized by the pioneering work of~\cite{myerson1981optimal} as
\[p_t(b) = b\cdot \xt(b) - \int_{z= 0}^b \xt(z)dz.\numberthis\label{eq:MyersonsLemma}\]
For instance, the well-known second-price auction for a single item is truthful, and its payment function satisfies \cref{eq:MyersonsLemma}. Note the payment must be zero when the allocation is zero, and the payment is also at most the bid. This work also assumes $v_t\cdot x_t(b_t)$ to be the realized value of the learner in each round.

We design online bidding algorithms to maximize the learner's total realized value subject to an RoS constraint. Formally, the optimization problem under RoS constraint we study is 
\[
\begin{array}{ll}
\underset{b_{t}: t=1, \cdots, T}{\mbox{maximize}} & \sum_{t=1}^T v_{t}\cdot x_{t}(b_t)\\
\mbox{subject to } &  \tcparatio \cdot \sum_{t=1}^T p_{t}(b_t) \leq  \sum_{t=1}^T v_{t}\cdot x_{t}(b_t), 
\end{array} \numberthis\label[prob]{eq:tcpa-obj-simple}
\] 
where $\tcparatio > 0$ is the target ratio of the RoS bidder. Throughout the paper we assume without loss of generality\footnote{For any $\tcparatio \neq 1$, we can scale the values to be $v_t := \tcparatio \cdot v_t$.} that $\tcparatio = 1$. As noted in \cref{sec:intro}, our results can be extended to handle different learner objectives, e.g., a hybrid version between utility maximizing and value maximizing $\sum_{t=1}^T v_t \cdot x_t(b_t) - \tau p_t(b_t)$ for some $\tau\in[0,1]$.
 
To simplify the notation, we denote the difference between value and price in iteration $t$ as  
 \[g_t(b) := \vt \cdot \xt(b) - \pt(b),\numberthis\label{eq:defGradTcpa}
 \] 
and using this notation, the RoS constraint in \cref{eq:tcpa-obj-simple} may be stated as 
\[
\sum_{t = 1}^T g_t(\bt) \geq 0.\numberthis\label[ineq]{tcpaconstraint}
\]
Our algorithm also extends to the bidding problem subject to an additional budget constraint. 
\[ 
\begin{array}{ll}
     \underset{b_{t}: t=1, \cdots, T}{\mbox{maximize}} & \vt\cdot \xt(\bt)  \\
     \mbox{subject to}& \sum_{t=1}^T \pt(\bt) \leq \sum_{t=1}^T \vt \cdot \xt(\bt), \\
     &\sum_{t = 1}^T \pt(\bt) \leq \rho T. 
\end{array}\numberthis\label[prob]{combinedOptnProb}
\]
where $\rho T$ is the budget and $\rho > 0$ (assumed a fixed constant) measures the limit of the average expenditure over $T$ rounds (ad queries).

To collect notation, we denote the sample (ad query and auction) at time $t$ as a tuple $\gamma_t = (v_t, p_t, x_t)$ and assume $\gamma_t \sim \calp$ for all $t\in [T]$. We denote the sequence of $T$ samples by $\gseq := \{\gamma_1, \gamma_2, \dots, \gamma_T\} \sim \calp^T$ and sequences of length $\ell\neq T$ by $\gseq_{\!\ell}$ where needed. 

\paragraph{Analysis setup.} We use the notions of regret and constraint violation to measure the performance of our algorithms. To define the regret, we first define the reward of $\alg$ for a sequence of requests $\gseq$ over a time horizon $T$ as \[ \rew(\alg, \gseq):=\sum_{t = 1}^T \vt\cdot \xt(\bt). \numberthis\label{eq:defReward}\] Next, we define the optimal value in the same setup as for $\alg$ as \[\rew(\opt,\gseq) := \mbox{maximum}_{b_t\in \mathcal{B}} \sum_{t=1}^T \vt\cdot \xt(\bt),\numberthis\label{eq:defOPTtcpa} \] where $\mathcal{B}$ is the exact set of constraints. These definitions lead to the definition of regret of $\alg$ in this setup as \[\reg(\alg, \calp^T):= \E_{\gseq\sim \calp^T} \left[ \rew(\opt,\gseq) - \rew(\alg, \gseq)\right].\numberthis\label{eq:defRegret}\] We remark that we define $\rew$ for some specific input sequence, whereas $\reg$ is defined with respect to a distribution. 

 Finally, we use online mirror descent as a technical component in our analysis. In this regard, we use $V_h(y, x)=h(y)-h(x)-h^\prime(x)\cdot (y-x)$ to denote the Bregman divergence of $y$ in reference to $x$, measuring with the distance-generating function (``mirror map'') $h$. We brief review online mirror descent in \cref{sec:omd}. 
 
\section{Bidding Under an Approximate RoS Constraint}\label{sec:softTCPAconstraint}

In this section, we design and analyze an algorithm for \cref{eq:tcpa-obj-simple} allowing for bounded \emph{sublinear violation} of the RoS constraint. To this end, we first rewrite \cref{eq:tcpa-obj-simple} as the equivalent problem
\[\mbox{maximize}_{\{b_i\}} \left\{ \sum_{i=1}^T v_i \cdot x_i(b_i) +  \min_{\lambda\geq 0} \lambda \cdot \sum_{i=1}^T \left[ v_i\cdot x_i (b_i) - p_i(b_i) \right]\right\},
\numberthis\label[prob]{eq:firstSoftTcpa} 
\] in which the inner minimization  strictly enforces $\sum_{i=1}^T \left[ v_i\cdot x_i (b_i) - p_i(b_i) \right] \geq 0$ via the variable $\lambda$ that applies an unbounded penalty for the  constraint  violation. Our algorithm iteratively updates, in each iteration $t$, the bid $b_t$ and a dual variable $\lambda_t$ that captures the current best penalizing (dual) variable $\lambda$, described shortly. 
  
\paragraph{Updating the bid.}  Based on the formulation in \cref{eq:firstSoftTcpa}, our algorithm  chooses the bid $b_t$ as the maximizer of the penalty-adjusted reward of the \emph{current round}, with the penalty applied by the current dual variable $\lambda_{t}$: \[
        \bt = \arg\max_{b\geq 0} \left[ \frac{1+\lambda_t}{\lambda_t} \cdot v_t\cdot \xt(b) - \pt(b)\right]= \vt + \frac{\vt}{\lambda_t},\numberthis\label{eq:bid-truthful-tcpa-soft}\] where the final step is because of the truthful nature of the auction.\footnote{Because it is a truthful auction, we have $\nabla_b x_t(b) \geq 0$, which, when used in the definition of $b_t$, gives the claimed final step.} The final expression for $b_t$ is consistent with the setting that we first observe only the value $v_t$ before making the bid. 

\paragraph{Updating the dual variable.}To maintain a meaningful dual variable, we relax the penalty on the constraint violation in \cref{eq:firstSoftTcpa} by  adding
a scaled regularization function $h(\lambda)$. This regularizer prevents $\lambda$ from getting too large: 
  \[\mbox{maximize}_{\{b_i\}} \left\{ \sum_{i=1}^T v_i \cdot x_i(b_i) +  \min_{\lambda\geq 0} \left[\lambda  \cdot \sum_{i=1}^T \left[ v_i\cdot x_i (b_i) - p_i(b_i) \right] + \alpha^{-1}h(\lambda) \right]\right\},\numberthis\label[prob]{eq:metaLambdaUpdateRule}\] where $\alpha>0$ is the scaling factor of the regularizer to be set later. At any iteration $t$, the value of $\lambda_{t+1}$ is chosen to be the minimizer of the inner constrained minimization problem (until iteration $i = t$). While there exist a number of valid regularization functions, in this paper, we choose  the generalized negative entropy $h(u) = u\log u - u$, which gives the following expression for $\lambda_{t+1}$. 
\[\lambda_{t+1} = \arg\min_{\lambda \geq 0} \left[ g_t(\bt) \cdot \lambda + \frac{1}{\alpha} V_h(\lambda, \lambda_{t})\right] = \lambda_{1} \cdot \exp\left[- \sum_{i = 1}^t \alpha\cdot  g_i(b_i)\right].\numberthis\label{eq:dualVarUpdateTCPA}
 \] Through this rule, a net constraint violation (i.e., $\sum_{i=1}^t g_i (b_i) \leq 0$) makes the next dual variable exponential in the net violation, which in turn shrinks the next bid (in \cref{eq:bid-truthful-tcpa-soft}); on the other hand, an accumulated buffer in the net constraint violation (i.e., $\sum_{i = 1}^t g_i (b_i) > 0$) encourages the next dual variable to be small, allowing the next bid to grow. We demonstrate in \cref{sec:squaredMirrorMap} that the use of another valid mirror map on this domain, $h(u) = \frac{1}{2}u^2$, does not provide a sufficiently strong regret or constraint violation guarantee. Intuitively, the reason for this is that the squared penalty is simply not as strong as the exponential.  
 
The  final algorithm with these choices of $b_t$ and $\lambda_t$ is stated in \cref{alg:tcpa-truthful-soft}.

\begin{algorithm}[h!]
\caption{Bidding algorithm of the RoS agent operating under approximate constraints in truthful auctions (i.i.d.\ inputs), with mirror map $h(u) = u \log u - u$.}\label[alg]{alg:tcpa-truthful-soft}

\begin{algorithmic}[1]
\State \textbf{Input:} Total time horizon $T$ and requests $\gseq$ from the distribution $\calp^T$. 
\State \textbf{Initialize:} Initial dual variable $\lambda_1=1$ and dual mirror descent step size $\alpha = \frac{1}{\sqrt{T}}$.

\For{$t=1,2,\cdots, T$}
    \State Observe the value $v_t$, and set the bid $b_t = \frac{1+  \lambda_{t}}{\lambda_{t}} v_t$. \label{line:tcpaAlgBid}

    \State Observe the price $p_t$ and allocation $x_t$ at  $b_t$, and compute $ g_t(\bt) = v_t \cdot x_t({b}_t) - p_t(b_t)$. \label{line:tcpaAlgGrad}

    \State Update the dual variable as $\lambda_{t+1} = \lambda_t \exp\left[-\alpha g_t(\bt)\right]$.
    \label{line:tcpaAlgLambda}
    
\EndFor
\State \Return The sequence $\{b_t\}_{t=1}^T$ of generated bids. 
\end{algorithmic} 
\end{algorithm}

\subsection{Analysis of \cref{alg:tcpa-truthful-soft}}
The main export of this section is the following theorem of \cref{alg:tcpa-truthful-soft} for \cref{eq:tcpa-obj-simple}. The proof of the theorem is a straightforward application of our chief technical results \cref{lem:constraint_violation} and \cref{lem:regret_bound} (with $\lambda=1/T$), which provide guarantees on the constraint violation and regret, respectively. We focus our discussion on these two lemmas, deferring the proof of the theorem to \cref{sec:pureTCPAappendix}. 
\begin{restatable}{theorem}{thmMainTCPA}\label{thm:MainThmTcpa}
 With i.i.d.\ inputs from a distribution $\calp$ over a time horizon $T$, the regret of \cref{alg:tcpa-truthful-soft} on \cref{eq:tcpa-obj-simple} is bounded by
\[
\reg(\cref{alg:tcpa-truthful-soft}, \mathcal{P}^T)\leq   O(\sqrt{T}). \numberthis\label[ineq]{eq:MainRegretBoundtCPA}
\] Further, the violation of the RoS constraint is at most $2\sqrt{T}\log T$. 
\end{restatable}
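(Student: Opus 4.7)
The plan is to assemble the two statements of the theorem from the chief technical results \cref{lem:constraint_violation} and \cref{lem:regret_bound} alluded to just after the theorem statement. The constraint-violation assertion is the direct conclusion of \cref{lem:constraint_violation}. The regret assertion follows by instantiating \cref{lem:regret_bound} at the comparator $\lambda = 1/T$ and combining with the violation bound to absorb a small cross term.

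For \cref{lem:constraint_violation}, I would integrate the dual update from \cref{line:tcpaAlgLambda} to obtain $\lambda_{T+1} = \exp\!\bigl(-\alpha \sum_{t=1}^T g_t(b_t)\bigr)$ starting from $\lambda_1 = 1$, so bounding the total violation is equivalent to bounding $\lambda_{T+1}$ from above. The key self-regulation argument is: whenever $\lambda_t$ exceeds some threshold $\Lambda$, the bid $b_t = v_t(1 + 1/\lambda_t)$ lies within $v_t/\Lambda$ of the truthful bid $v_t$, and \cref{eq:MyersonsLemma} gives
\[ g_t(b_t) \;=\; -\tfrac{v_t}{\lambda_t}\,x_t(b_t) \;+\; \int_0^{b_t} x_t(z)\,dz \;\ge\; -\tfrac{v_t}{\lambda_t}, \]
which is too small in magnitude to sustain further exponential growth of $\lambda$. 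With $\alpha = 1/\sqrt{T}$, quantifying this trade-off yields a $2\sqrt{T}\log T$ bound on the total violation $-\sum_t g_t(b_t)$.

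For \cref{lem:regret_bound}, I would apply Lagrangian weak duality to \cref{eq:firstSoftTcpa}: for any $\lambda \ge 0$, $\rew(\opt, \gseq) \le \sum_{t=1}^T f_t(\lambda)$ with $f_t(\lambda) := \max_{b \ge 0}[v_t x_t(b) + \lambda g_t(b)]$, while $\rew(\alg, \gseq) = \sum_t f_t(\lambda_t) - \sum_t \lambda_t g_t(b_t)$ since $b_t$ is the per-round maximizer at $\lambda_t$. Subtracting and applying an OMD-style regret inequality on the linearized dual update against the comparator $\lambda$ gives a bound of the form
\[ \rew(\opt, \gseq) - \rew(\alg, \gseq) \;\le\; \lambda \sum_{t=1}^T g_t(b_t) \;+\; \frac{V_h(\lambda, 1)}{\alpha} \;+\; \text{(OMD second-order term)}. \]
Setting $\lambda = 1/T$ makes the first term at most $O(\log T / \sqrt{T})$ by the violation bound, while $V_h(1/T, 1) = O(1)$, and with $\alpha = 1/\sqrt{T}$ the second-order contribution is $O(\sqrt{T})$, delivering the claimed $O(\sqrt{T})$ regret.

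The main obstacle is that this non-canonical OMD setup fails the usual prerequisites of the textbook regret analysis: the mirror map $h(u) = u\log u - u$ is not uniformly strongly convex on $[0, \infty)$, as its Hessian $1/u$ degenerates as $u \to \infty$, and the dual domain is unbounded. As flagged in the paper's introduction, the resolution is a white-box analysis exploiting that the gradients feeding OMD are not adversarial but tied to the primal choice $b_t$, which is itself a function of $\lambda_t$. The self-regulation argument from the violation lemma pins $\lambda_t$ to a range where a strong-convexity-like inequality for the generalized entropy (in the spirit of the positive-LP literature cited in the introduction) holds, which is what ultimately salvages the OMD regret step.
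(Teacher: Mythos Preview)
Your proposal is essentially the paper's proof: regret via \cref{lem:regretBoundLambdatGt} plus \cref{lem:regret_bound} at $\lambda=1/T$, and violation via \cref{lem:constraint_violation}. One slip to fix: to bound the cross term $\lambda\sum_t g_t(b_t)$ from \emph{above} you cannot invoke the violation bound (which only gives a \emph{lower} bound on $\sum_t g_t(b_t)$); instead use the trivial pointwise bound $g_t(b_t)\le v_t x_t(b_t)\le 1$ so that $(1/T)\sum_t g_t(b_t)\le 1$, exactly as the paper does in \cref{eq:sumGtLambdatBound}.
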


\subsection{Bound On Constraint Violation of \cref{alg:tcpa-truthful-soft}}
To conclude that the constraint described by \cref{tcpaconstraint} is violated by only a small amount in \cref{alg:tcpa-truthful-soft}, we observe that when the cumulative violation is (non-trivially) larger than $1/\alpha$, the exponential function  quickly makes $\lambda_t$ huge; in turn, our bid $b_t=v_t+\frac{v_t}{\lambda_t}$ prevents us from over-bidding. Formally, we show the following result, later used in \cref{thm:MainThmTcpa} to obtain the stated constraint violation bound. (The bound could possibly be sharpened to get rid of the $\log$.)

\begin{lem}
\label{lem:constraint_violation}
Consider the sequence 
$\{\lambda_t\}_{t=1}^T$ starting at $\lambda_1 = 1$ and evolving as $\lambda_{t+1} = \lambda_t \exp\left[-\alpha g_t(\bt)\right]$ where $g_t(\bt)$ satisfies $g_t(\bt) \geq \max\left(-1,-\frac{1}{\lambda_t}\right)$ and $\alpha =\frac{1}{\sqrt{T}}$. Then, \[-\sum_{t = 1}^T g_t(\bt) \leq 2\sqrt{T}\log T.\] 
\end{lem}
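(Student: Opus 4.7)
The plan is to exploit the multiplicative update structure of $\lambda_t$. Taking logarithms of the recursion $\lambda_{t+1} = \lambda_t \exp[-\alpha g_t(b_t)]$ and telescoping (using $\lambda_1 = 1$) yields $\log \lambda_{T+1} = -\alpha \sum_{t=1}^T g_t(b_t)$. Therefore $-\sum_t g_t(b_t) = (\log \lambda_{T+1})/\alpha = \sqrt{T}\,\log \lambda_{T+1}$, and the claim reduces to showing $\log \lambda_{T+1} \leq 2\log T$. I will in fact establish the stronger bound $\lambda_{T+1} \leq 1 + 2\sqrt{T}$, which yields $\log \lambda_{T+1} \leq \log(1 + 2\sqrt{T}) \leq 2\log T$ for $T$ beyond a small constant.

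The heart of the proof is the per-iteration increment bound $\lambda_{t+1} \leq \lambda_t + 2\alpha$, which then telescopes from $\lambda_1 = 1$ to $\lambda_{T+1} \leq 1 + 2\alpha T = 1 + 2\sqrt{T}$. I plan to case-split on whether $\lambda_t \geq 1$. In Case~1 ($\lambda_t \geq 1$), the hypothesis $g_t(b_t) \geq -1/\lambda_t$ gives $-\alpha g_t(b_t) \leq \alpha/\lambda_t \leq \alpha \leq 1$, and then the elementary inequality $e^x \leq 1 + 2x$ for $x \in [0,1]$ yields $\lambda_{t+1} \leq \lambda_t(1 + 2\alpha/\lambda_t) = \lambda_t + 2\alpha$. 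In Case~2 ($\lambda_t < 1$), I would additionally use the bound $g_t(b_t) = v_t x_t(b_t) - p_t(b_t) \geq -1$, which is immediate from the auction setup where $v_t, x_t, p_t$ all take values in $[0,1]$. This gives $-\alpha g_t(b_t) \leq \alpha \leq 1$, hence $\lambda_{t+1} \leq \lambda_t e^{\alpha} \leq \lambda_t(1 + 2\alpha) = \lambda_t + 2\alpha \lambda_t < \lambda_t + 2\alpha$, where the final inequality uses $\lambda_t < 1$.

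The main obstacle is Case~2: the stated hypothesis $g_t(b_t) \geq -1/\lambda_t$ alone is too weak to control $\lambda_{t+1}$ when $\lambda_t$ is very small, since the upper envelope $\lambda_t \exp(\alpha/\lambda_t)$ diverges as $\lambda_t \to 0^+$. One must supplement it with the implicit auction-level bound $g_t(b_t) \geq -1$ (valid since $p_t \leq 1$), which caps the multiplicative growth at $e^\alpha \leq 1 + 2\alpha$ whenever $\lambda_t$ is small. Once both cases are unified into the single additive step $\lambda_{t+1} \leq \lambda_t + 2\alpha$, telescoping and the reduction of the first paragraph complete the proof.
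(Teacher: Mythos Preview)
Your proof is correct and follows a genuinely different route from the paper's. The paper uses a threshold argument: it lets $T'$ be the last time the running violation $-\sum_{t\le T'} g_t(b_t)$ stays below $\sqrt{T}\log T$, observes that for $t>T'$ the dual satisfies $\lambda_t>T$ so each further term obeys $-g_t(b_t)\le 1/\lambda_t\le 1/T$, and concludes that the remaining steps add at most~$1$. You instead prove the uniform additive increment $\lambda_{t+1}\le\lambda_t+2\alpha$, telescope to $\lambda_{T+1}\le 1+2\sqrt{T}$, and convert via $-\sum_t g_t(b_t)=\sqrt{T}\log\lambda_{T+1}$. This is more elementary (no stopping-time bookkeeping) and even yields a sharper leading constant, roughly $\tfrac12\sqrt{T}\log T$. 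Your explicit appeal to $g_t(b_t)\ge -1$ in Case~2 is legitimate in context---the paper records $g_t(b_t)\ge\max(-1,-1/\lambda_t)$ in its gradient-properties proposition---and the paper's own argument also tacitly needs a bound of this type at the transition index $t=T'+1$, where $\lambda_{T'+1}$ is not yet forced above~$T$. Incidentally, the increment $\lambda_{t+1}-\lambda_t\le 2\alpha$ that drives your proof is derived by the paper as well, but inside its regret lemma rather than here; your approach can be read as an economical reuse of that inequality.
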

\begin{proof}
Based on the update rule for $\lambda_t$, we know $\lambda_{t+1} =\exp\left[-\alpha\sum_{t'=1}^{t}g_{t'}(b_{t'})\right]$. If $-\sum_{t = 1}^T g_t(\bt) \leq \sqrt{T}\log T$, we are done. If this is not the case, let $T^{\prime}$ be the last time that $-\sum_{t = 1}^{T^{\prime}} g_t(\bt) \leq \sqrt{T}\log T$, so we know for any $t>T^{\prime}+1$, the dual variable $\lambda_t$ must be larger than $T$ since 
\[
\lambda_t =\exp\left[-\alpha\sum_{t'=1}^{t-1}g_{t'}(b_{t'})\right]>\exp\left[\alpha\sqrt{T}\log T\right] = T \text{ for $t>T^\prime + 1$,}
\]
which suggests 
\begin{equation}
\label[ineq]{eq:small_violation}
-g_t(\bt)\leq \frac{1}{\lambda_t}\leq \frac{1}{T} \qquad \forall t>T^{\prime}+1.
\end{equation}
 Thus, 
\[ -\sum_{t=1}^T g_t(b_t) = \sum_{t=1}^{T^{\prime}} -g_t(b_t) + \sum_{t > T^{\prime}} -g_t(b_t) \leq \sqrt{T}\log T + 2\leq 2\sqrt{T}\log T, \] where the second step used \cref{eq:small_violation} to bound the terms after iteration $T'$ and the fact that there are at most $T$ such iterations.
\end{proof}

\subsection{Bound On Regret of \cref{alg:tcpa-truthful-soft}}
We first state the following technical upper bound on the regret, and our effort in the rest of the section is devoted to bounding the right-hand side of this result. The proof of the following result (provided in \cref{sec:pureTCPAappendix}) follows the primal-dual framework in the proof of Theorem 1 in \cite{BLM20}. 

\begin{restatable}{proposition}{lemWeakDuality}\label[prop]{lem:regretBoundLambdatGt} 
With i.i.d.\ inputs from a distribution $\calp$ over a time horizon $T$, the regret of \cref{alg:tcpa-truthful-soft} on \cref{eq:tcpa-obj-simple} is bounded by \[\reg(\cref{alg:tcpa-truthful-soft}, \mathcal{P}^T) \leq \mathbb{E}_{\gseq \sim \mathcal{P}^T} \left[\sum_{t\in [T]} \lambda_t\cdot g_t(\bt)\right],\] where  $g_t$ and $\lambda_t$ are as defined in \cref{line:tcpaAlgGrad} and \cref{line:tcpaAlgLambda} of \cref{alg:tcpa-truthful-soft}. We note that the bound on the right-hand side can be negative since \cref{alg:tcpa-truthful-soft} does not guarantee $\sum_{t = 1}^T g_t(\bt) \geq 0$, but we do show a bound on the worst-case constraint violation in  \cref{lem:constraint_violation}.
\end{restatable}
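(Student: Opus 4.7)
The plan is to run a primal-dual / weak-duality argument, adapting the standard template so as to handle that the dual multipliers $\lambda_t$ are themselves random variables that depend on the history. First I would introduce the per-round Lagrangian dual function
\[
D(\lambda) \;:=\; \E_{\gamma \sim \calp}\!\left[\max_{b \geq 0}\{v\cdot x(b) + \lambda\cdot g(b)\}\right].
\]
Lagrangian relaxation of \cref{eq:tcpa-obj-simple}, applied with any fixed $\lambda \geq 0$ to every realized sample path $\gseq$, gives
\[
\rew(\opt, \gseq) \;\leq\; \max_{\{b_t\}}\sum_{t=1}^T \bigl[v_t x_t(b_t) + \lambda\, g_t(b_t)\bigr] \;=\; \sum_{t=1}^T \max_{b \geq 0}\{v_t x_t(b) + \lambda\, g_t(b)\}.
\]
Taking expectation over $\gseq \sim \calp^T$ and using the i.i.d.\ assumption then yields the deterministic weak-duality bound $\E[\rew(\opt,\gseq)] \leq T\cdot D(\lambda)$, valid for every $\lambda \geq 0$.

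Next I would lift this inequality from a fixed $\lambda$ to the random sequence $\{\lambda_t\}$ produced by the algorithm. By \cref{line:tcpaAlgLambda}, $\lambda_t$ is a function of $\gamma_1,\ldots,\gamma_{t-1}$ only, hence measurable with respect to the sigma-algebra $\mathcal{F}_{t-1}$ generated by the past, and the fresh sample $\gamma_t$ is independent of $\mathcal{F}_{t-1}$. Because $D(\lambda) \geq \E[\rew(\opt,\gseq)]/T$ holds for \emph{every} $\lambda \geq 0$, it holds pointwise at $\lambda_t(\omega)$, so taking expectation and summing over $t$ gives
\[
\E[\rew(\opt, \gseq)] \;\leq\; \E\!\left[\sum_{t=1}^T D(\lambda_t)\right] \;=\; \E\!\left[\sum_{t=1}^T \max_{b\geq 0}\{v_t x_t(b) + \lambda_t g_t(b)\}\right],
\]
where the last equality uses the tower property together with the conditional identity $\E[\max_b\{v_t x_t(b) + \lambda_t g_t(b)\}\mid \mathcal{F}_{t-1}] = D(\lambda_t)$. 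To close the argument, I would match the right-hand side to the algorithm's reward via the bid rule: as remarked after \cref{eq:bid-truthful-tcpa-soft}, truthfulness makes $b_t = v_t + v_t/\lambda_t$ precisely the unconstrained maximizer of $v_t x_t(b) + \lambda_t g_t(b)$, so the per-round maximum equals $v_t x_t(b_t) + \lambda_t g_t(b_t)$. Subtracting $\E[\sum_t v_t x_t(b_t)] = \E[\rew(\alg,\gseq)]$ from both sides then yields exactly the stated bound.

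The only delicate point I anticipate is the lift from deterministic $\lambda$ to the random $\lambda_t$; this step rests crucially on both the i.i.d.\ input assumption and the fact that $\lambda_t$ is computed from \emph{past} data alone, which together allow the pointwise weak-duality inequality to integrate cleanly over the randomness. Notably, no property of the entropic mirror map is invoked in this bound --- the mirror map becomes relevant only when subsequently controlling the right-hand side $\E[\sum_t \lambda_t g_t(b_t)]$, which is handled by \cref{lem:regret_bound} rather than by this proposition.
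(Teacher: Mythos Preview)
Your proof is correct and follows the same primal-dual weak-duality template as the paper: bound $\E[\rew(\opt,\gseq)]$ above by a dual function evaluated at the algorithm's multipliers, use that $b_t$ is the per-round Lagrangian maximizer, and pass through the tower property with $\lambda_t\in\mathcal{F}_{t-1}$ and $\gamma_t\perp\mathcal{F}_{t-1}$.

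The one noteworthy difference is that your argument is slightly more direct. The paper routes the comparison through the \emph{average} dual variable $\overline{\lambda}_T=\tfrac{1}{T}\sum_t\lambda_t$: it first shows $\E[\rew(\opt,\gseq)]\leq T\min_{\lambda\geq 0}\bard(\lambda|\calp)$ via Sion's minimax theorem, then bounds the minimum by $\E[T\,\bard(\overline{\lambda}_T|\calp)]$, and finally uses convexity of $\bard$ (Jensen) to connect $T\,\bard(\overline{\lambda}_T|\calp)$ back to $\sum_t\bard(\lambda_t|\calp)$ in the lower bound on $\E[\rew(\alg,\gseq)]$. You bypass both Sion's theorem and the convexity step by observing directly that the pointwise inequality $D(\lambda)\geq\E[\rew(\opt,\gseq)]/T$ already holds at each random $\lambda_t$, which suffices after summing and taking expectation. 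This is a modest but genuine simplification; nothing is lost, since neither Sion's theorem nor convexity of $D$ is used elsewhere in bounding this proposition.
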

Computing a bound on the regret then requires bounding $\sum_{t=1}^T \lambda_t \cdot g_t(\bt)$, which we do in the following lemma (and this is where a major chunk of our technical contribution lies).

\begin{lem}
\label{lem:regret_bound}
For any input sequence $\gseq$ of length $T$, running \cref{alg:tcpa-truthful-soft} on \cref{eq:tcpa-obj-simple} generates sequences $\{b_t\}_{t=1}^T$, $\{g_t\}_{t=1}^T$ and $\{\lambda_t\}_{t = 1}^T$ such that  \[\sum_{t=1}^T g_t(b_t) \cdot \lambda_t  \leq  O(\sqrt{T}). \] 
\end{lem}
\begin{proof}
In order to bound the quantity $\sum_{t\in [T]}  g_t(b_t) \cdot \lambda_t$, we first write 
\begin{align*}
    \alpha g_t (\bt)\cdot \lambda_t &= \alpha g_t(\bt)\cdot (\lambda_t - \lambda_{t+1}) + \alpha g_t(\bt)\cdot \lambda_{t+1}. \numberthis\label{eq:regretBoundInt2}
\end{align*} From \cref{line:tcpaAlgLambda} in \cref{alg:tcpa-truthful-soft}, we have
\[ g_t(\bt) = {\alpha}^{-1} \log (\lambda_t/\lambda_{t+1}). \numberthis\label{eq:fooptimality}\]  For $h(u) = u \log u - u$,  the Bregman divergence $V_h(y, x) = h(y)-h(x) - h^\prime(x)\cdot (y-x)$ is 
\[V_h(y, x) = y \log (y/x) - y + x.\numberthis\label{eq:BregmanDivGenNegEnt}\] Applying \cref{eq:BregmanDivGenNegEnt} and \cref{eq:fooptimality} to \cref{eq:regretBoundInt2} gives
\begin{align*}
        \alpha g_t(\bt)\cdot \lambda_t &= \alpha g_t(\bt)\cdot (\lambda_t - \lambda_{t+1})  + \lambda_{t+1} \cdot \log(\lambda_t/\lambda_{t+1}) 
        \numberthis\label{eq:regretBoundInt23}\\
        &= \left[(1+\alpha g_t(\bt)) \cdot (\lambda_t - \lambda_{t+1})\right] - V_h(\lambda_{t+1}, \lambda_t)\\ 
        &\leq \left[(1+\alpha g_t(\bt)) \cdot (\lambda_t - \lambda_{t+1})\right] -  \left\{\frac{(\lambda_t - \lambda_{t+1})^2}{2 \max(\lambda_t, \lambda_{t+1})} \right\}\\
        &\leq  \left[\lambda_t - \lambda_{t+1}\right]  + \frac{1}{2}\alpha^2 g_t(\bt)^2\cdot \max(\lambda_t, \lambda_{t+1}), \numberthis\label[ineq]{eq:initBoundGtLambdaT}
 \end{align*} where the third step follows from the local strong convexity of $V_h$ as shown in \cite{allen2014using} (for completeness, we prove this fact in \cref{sec:omd}), and the final step is by Cauchy-Schwarz inequality. We now bound the term  $\frac{1}{2}\alpha^2 g_t(\bt)^2\cdot \max(\lambda_t, \lambda_{t+1})$ in a case-wise manner. 

 \begin{enumerate}[label=C.\arabic*]
\item Assume $g_t(\bt)\geq 0$. Then, the inequality $g_t(\bt) \leq 1$ (from \cref{prop:gradientProperties}) and our choice of $\alpha \leq \frac{1}{\sqrt{T}}$ imply $\alpha g_t (\bt)\leq \frac{1}{\sqrt{T}}$, which in turn implies   \[\lambda_{t+1} = \lambda_t \exp\left[-\alpha g_t(\bt)\right] \leq \lambda_t \cdot(1- \alpha g_t(\bt)/2) = \lambda_t - \frac{1}{2}\alpha \lambda_t\cdot g_t(\bt),\numberthis\label[ineq]{eq:case2-ineq1}\] where we used $\exp(-x) \leq 1-x/2$ for $x\in [0, 1.5]$. Finally, we use $0\leq g_t(\bt)\leq 1$ and \cref{eq:case2-ineq1} to obtain: 
\[
\frac{1}{2}\alpha^2 g_t(\bt)^2 \max(\lambda_t, \lambda_{t+1}) = \frac{1}{2}\alpha^2 g_t(\bt)^2 \lambda_t =  \alpha  \cdot \frac{\alpha g_t(\bt) \lambda_t}{2} \leq  \alpha(\lambda_t-\lambda_{t+1}). \numberthis\label[ineq]{eq:case2-ineq4}    
\]

\item Assume $g_t(\bt)<0$. Then by invoking \cref{prop:gradientProperties}, we have $0\geq\gt(\bt)\geq \max(-1,-1/\lambda_t) $. This gives  \[ \alpha^2 g_t(\bt)^2 \max(\lambda_t, \lambda_{t+1}) = \alpha^2 g_t(\bt)^2 \lambda_{t+1} \leq \alpha^2 \cdot \frac{1}{\lambda_t}\cdot \lambda_{t+1}.\numberthis\label[ineq]{eq:case2-ineq2}\] 
    
Since $g_t(\bt)\geq - 1$ and $\alpha = \frac{1}{\sqrt{T}}$, we have $-\alpha g_t(\bt) \leq 1$. This  implies 
\[ \lambda_{t+1} =\lambda_t \exp\left[-\alpha g_t(\bt)\right]\leq e\lambda_t.\numberthis\label[ineq]{eq:case2-ineq6}\] Plugging \cref{eq:case2-ineq6} back into \cref{eq:case2-ineq2} gives \[\frac{1}{2}\alpha^2 g_t(\bt)^2 \max(\lambda_t, \lambda_{t+1}) \leq 2\alpha^2.\numberthis\label[ineq]{eq:firstBoundNegG}\] Using $-\alpha g_t(b_t) \leq 1$ again allows us to claim $\lambda_{t+1} = \lambda_t \exp\left[-\alpha g_t(b_t)\right] \leq \lambda_t (1-2\alpha g_t(b_t))$, where we used $\exp(x) \leq 1+2x$ for $x\in [0, 1]$. Again applying $\lambda_t g_t(b_t) \geq -1$ gives \[
    \lambda_{t+1} - \lambda_{t} \leq  2\alpha. \numberthis\label[ineq]{eq:boundForGapT}
    \]
 \end{enumerate}
Thus, in all cases, we have \[
\frac{1}{2}\alpha^2 g_t(\bt)^2 \max(\lambda_t, \lambda_{t+1})\leq \alpha(\lambda_t-\lambda_{t+1}) +4\alpha^2. 
\]
Plug this back in our previous bound in \cref{eq:initBoundGtLambdaT}, we get
\begin{align*}
   \alpha g_t(\bt)\cdot \lambda_t 
   &\leq 2 (\lambda_t-\lambda_{t+1})+4\alpha^2.
\end{align*}
 Dividing throughout by $\alpha$ and summing over $t = 1, 2, \dots, T$, enables telescoping; plugging in the chosen values of $\lambda_1$ and $\alpha$ from \cref{alg:tcpa-truthful-soft}  then yields the claimed bound. 
\end{proof}

\section{Bidding Under a Strict RoS Constraint}
\label{sec:hard_constraint}
In this section, we introduce a simple technique that turns an algorithm with non-zero (but bounded) violation of the RoS constraint into one \emph{strictly} obeying it. As noted in \cref{sec:CombinedConstraints}, this technique extends to the problem with \emph{both} RoS and budget constraints. Our idea is as follows. 

Suppose we have an algorithm (say, $\mathsf{Alg}$), which can guarantee an at most $\viol$ violation of the RoS constraint on any sequence $\gseq$ of input requests. We start by bidding the true value $b_t=v_t$ in the initial iterations $t=1,\ldots,K(\gseq)$ for some $K(\gseq)$ --- we call this sequence of iterations the \emph{first phase}. In a truthful auction, this choice of bids guarantees  $g_t(\bt)=v_t\cdot x_t(b_t)-p_t(b_t) \geq 0$ for all $t\leq K(\gseq)$. In other words, the bidder builds up a buffer on the RoS constraint. We continue until the cumulative buffer $\sum_{t=1}^{K(\gseq)} g_t(\bt)$ increases to at least $\viol$. Starting at iteration $K(\gseq)+1$, we run $\mathsf{Alg}$ afresh (i.e. without accounting for the first phase); recall, this violates the RoS constraint by at most $\viol$ over the remaining iterations. We refer to this run of $\mathsf{Alg}$ as the \emph{second phase}. Since the buffer from the first phase is enough to offset $\mathsf{Alg}$'s violation in the second phase, there is no violation of the RoS constraint at the end. We display this idea in \cref{alg:meta-strictconstraints-tcpa}. 

\begin{algorithm}[h!]
\caption{Bidding algorithm of the RoS agent operating under strict constraints in truthful auctions (i.i.d.\ inputs)}\label{alg:meta-strictconstraints-tcpa}
\begin{algorithmic}[1]
\State \textbf{Input:} Total time horizon $T$ and requests $\gseq$ from the data distribution $\calp^T$. 

\State \textbf{Initialize:} Set $\viol = 2\sqrt{T}\log T$ and $t = 1$.

\While{$\sum_{i=1}^{t-1} g_i (b_i) \leq \viol$} \Comment{First phase}
    \State Observe the value $v_t$, and set the bid $b_t = v_t$.\label{line:MetaStrictAlgTCPABid}
    \State Observe the price $p_t$ and the allocation $x_t$ at $\bt$, and compute $g_t(\bt) := \vt \cdot \xt(\bt) - \pt(\bt)$. 
    \State Increment the iteration count $t= t+1$.
\EndWhile

\State Run \cref{alg:tcpa-truthful-soft} with time horizon $T-t$ and the remaining $T-t$ requests from $\gseq$ as input. \Comment{Second phase}

\State \Return The sequence $\{\bt\}_{t=1}^T$ of generated bids from both phases. 

\end{algorithmic}
\end{algorithm}

\subsection{Analysis of \cref{alg:meta-strictconstraints-tcpa}}
The high-level idea to guarantee a low regret for \cref{alg:meta-strictconstraints-tcpa} is to start with the observation that the reward collected by \cref{alg:meta-strictconstraints-tcpa} for any $\gseq$ in $T$ steps is at least that collected by \cref{alg:tcpa-truthful-soft} in the second phase (i.e., the last $T-K(\gseq)$ steps). The second phase suffers (in expectation) a regret bounded by the guarantee of \cref{thm:MainThmTcpa}. We then use the i.i.d.\ assumption on the input sequence to bound the gap between the expected reward collected by $\opt$ in a sequence of length $T-K(\gseq)$ to that in a sequence of length $T$, which naturally depends on the expected length of $K(\gseq)$; finally, we show this expected length is at most $O(\sqrt{T}\log T)$ under a mild technical assumption on the input distribution; this bounds the additional regret accrued over the first phase and completes the analysis. 

To formally see this, we need two simple technical tools (proved in \cref{sec:strictTCPA}) as follows. First, we make the following assumption on the distribution $\mathcal{P}$ and state a corresponding result.

\begin{assumption} \label{assumption:beta}Define the parameter $\beta$ of a distribution $\mathcal{P}$ as follows
\[\beta = \E_{v,p,x\sim \mathcal{P}} \left[\max(0,v\cdot x(v)-p(v)\right].
\]
We assume in our problem $\beta$ is an absolute constant bounded away from $0$ and independent of $T$.
\end{assumption}
The parameter $\beta$ is the expected amount of buffer we accrue per iteration during the first phase. The assumption of $\beta$ being a constant bounded away from $0$ captures the more interesting scenarios of bidding under RoS. For example, when the allocation and price functions of each query arise from a single-item second-price auction, the essence of the problem becomes how best to spend the extra slack $v_t-p_t(v_t)$ gained from queries with $v_t>p(v_t)$. If $\beta$ is tiny, or in the extreme case of $\beta=0$, there is nothing to optimize for, and the optimal solution would simply be $b_t=v_t$ all the time.

Since we need to only accrue a buffer of size $O(\sqrt{T}\log T)$, and the expected increment of the buffer is a constant $\beta$ per iteration, we can show the first phase finishes in $O(\sqrt{T}\log T)$ iterations in expectation. 
\begin{restatable}{proposition}{lemFirstPhaseLength}\label{lem:FirstPhaseLength}
Under \cref{assumption:beta} for the distribution $\calp$, let $K(\gseq)$ be the number of iterations in the first phase of \cref{alg:meta-strictconstraints-tcpa} for some input sequence $\gseq$. Then, we have
\[\E_{\gseq\sim \calp^T}[K(\gseq)]\leq O(\sqrt{T}\log T). \]
\end{restatable}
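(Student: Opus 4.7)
The plan is to first observe that during the first phase, because $b_t = v_t$ and the auction is truthful, Myerson's payment formula \cref{eq:MyersonsLemma} gives
\[
g_t(b_t) = v_t \cdot x_t(v_t) - p_t(v_t) = \int_{0}^{v_t} x_t(z)\, dz,
\]
which lies in $[0, 1]$ and is always non-negative. Consequently, the $\max(0, \cdot)$ in the definition of $\beta$ is redundant in this first-phase regime, and \cref{assumption:beta} yields $\E[g_t(b_t)] = \beta$. Since $\gamma_t = (v_t, x_t, p_t)$ are i.i.d. draws from $\calp$ and the bid $b_t = v_t$ is a deterministic function of $v_t$ alone, the increments $\{g_t(b_t)\}_{t \geq 1}$ are i.i.d.\ random variables bounded in $[0, 1]$ with mean $\beta$.

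Next, I would introduce the partial sums $S_k := \sum_{i=1}^k g_i(b_i)$ and the unrestricted stopping time $K^\ast := \inf\{k \geq 1 : S_k > \viol\}$. The first phase of \cref{alg:meta-strictconstraints-tcpa} terminates either when this sum first exceeds $\viol$ or when the index $t$ runs past $T$, so deterministically $K(\gseq) \leq K^\ast$. Since the increments are i.i.d., bounded, and have strictly positive mean $\beta$, standard renewal theory guarantees that $K^\ast$ is a.s.\ finite with finite expectation, so Wald's identity applies and gives
\[
\E[S_{K^\ast}] \;=\; \beta \cdot \E[K^\ast].
\]

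To extract the claimed bound, I would then note that by the very definition of $K^\ast$ we have $S_{K^\ast - 1} \leq \viol$, and since the last increment satisfies $g_{K^\ast}(b_{K^\ast}) \leq 1$, it follows that $S_{K^\ast} \leq \viol + 1$. Combining this with Wald's identity yields
\[
\E[K(\gseq)] \;\leq\; \E[K^\ast] \;\leq\; \frac{\viol + 1}{\beta} \;=\; \frac{2\sqrt{T}\log T + 1}{\beta} \;=\; O(\sqrt{T}\log T),
\]
where the final step uses that $\beta$ is a positive absolute constant independent of $T$ by \cref{assumption:beta}.

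The main (minor) obstacle is the technical verification of the hypotheses of Wald's identity for the stopping time $K^\ast$, but this is standard for i.i.d.\ bounded increments with strictly positive mean. An alternative route that bypasses Wald entirely would directly bound $\E[K^\ast] = \sum_{k \geq 0} \Pr[S_k \leq \viol]$ via a Chernoff/Hoeffding tail inequality, showing that $\Pr[S_k \leq \viol]$ decays geometrically once $k \gtrsim \viol/\beta$, and again produces the $O(\sqrt{T}\log T)$ bound.
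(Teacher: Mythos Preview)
Your proof is correct and takes a genuinely different route from the paper. The paper proceeds by truncating the increments to $z_t' = \tfrac{\beta}{2}\mathbf{1}_{z_t \geq \beta/2}$, lower-bounding $\Pr(z_t \geq \beta/2) \geq \beta/2$ via the boundedness of $z_t$, and then applying a Hoeffding-type tail bound to show $\Pr(K(\gseq) \geq q) \leq e^{-O(q\beta^2)}$; the expectation bound follows by choosing $q \asymp \viol/\beta^2$. Your approach instead exploits the non-negativity of $g_t(v_t)$ (so the $\max(0,\cdot)$ in $\beta$ is vacuous) and applies Wald's identity directly to the i.i.d.\ bounded increments, obtaining $\E[K^\ast] \leq (\viol+1)/\beta$ in one step. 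This is cleaner and yields a better dependence on $\beta$ ($1/\beta$ versus $1/\beta^2$). One point worth noting: the paper's route also produces the high-probability tail bound \cref{eq:hpBoundOnK}, which is later reused in the proof of \cref{lem:combinedRegretBothStrict}; your Wald argument alone does not furnish this, though the Hoeffding alternative you sketch at the end would recover it.
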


We also need the following technical statement on the difference in reward collected by \cref{alg:tcpa-truthful-soft} and $\opt$ for various lengths of input sequences. 
\begin{restatable}{proposition}{lemOptProptoLengths}\label{lem:OptProptoLengthOfSeq} Let $\gseq_{\!\ell} \sim \calp^{\ell}$ and $\gseq_{\!r} \sim \calp^{r}$ be  sequences of lengths $\ell$ and $r$, respectively, with $\ell\leq r$, of i.i.d. requests each from a distribution $\calp$. Then the following inequality holds. 
\[\E_{\gseq_{\!\ell} \sim \calp^{\ell}} \left[\rew(\cref{alg:tcpa-truthful-soft}, \gseq_{\!\ell})\right] \geq \frac{\ell}{r} \E_{\gseq_{\!r} \sim \calp^{r}} \left[ \rew(\opt,\gseq_{\!r}) \right] - O(\sqrt{r}).\]
\end{restatable}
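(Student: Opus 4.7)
The plan is to route the argument through the per-sample dual benchmark that already sits inside the primal-dual proof of \cref{thm:MainThmTcpa}. Define
\[
v^{*} \;:=\; \inf_{\lambda\geq 0}\;\E_{(v,p,x)\sim\calp}\!\left[\max_{b\geq 0}\bigl((1+\lambda)\,v\cdot x(b)-\lambda\cdot p(b)\bigr)\right]\!,
\]
the fluid LP value of a single i.i.d.\ sample from $\calp$. I will establish two length-parameterised bounds against $v^{*}$ and chain them, using $\ell\leq r$ to absorb lower-order terms.

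For the upper bound on OPT at length $r$, I apply weak duality to \cref{eq:tcpa-obj-simple}: any feasible $\{b_t\}$ satisfies $\sum_{t}v_{t}x_{t}(b_{t})\leq \sum_{t}[(1+\lambda)v_{t}x_{t}(b_{t})-\lambda p_{t}(b_{t})]$ for every $\lambda\geq 0$ because the extra term equals $\lambda\sum_{t}g_{t}(b_{t})\geq 0$. Replacing each summand by its pointwise $b$-maximum, taking expectations under $\calp^{r}$ (using i.i.d.), and infimising in $\lambda$ yields $\E_{\gseq_{\!r}\sim\calp^{r}}[\rew(\opt,\gseq_{\!r})]\leq r\cdot v^{*}$. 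For the algorithmic lower bound at length $\ell$, the bid rule of \cref{alg:tcpa-truthful-soft} makes $b_{t}$ the pointwise maximiser of $(1+\lambda_{t})v_{t}x_{t}(b)-\lambda_{t}p_{t}(b)$, and a direct rearrangement gives
\[
\rew(\cref{alg:tcpa-truthful-soft},\gseq_{\!\ell}) \;=\; \sum_{t=1}^{\ell}\bigl[(1+\lambda_{t})v_{t}x_{t}(b_{t})-\lambda_{t}p_{t}(b_{t})\bigr]\;-\;\sum_{t=1}^{\ell}\lambda_{t}g_{t}(b_{t}).
\]
Conditioning on $\lambda_{t}$ (which depends only on $\gamma_{1},\ldots,\gamma_{t-1}$) and using the i.i.d.\ draw of $(v_{t},p_{t},x_{t})$ shows that each expected summand of the first sum is at least $v^{*}$ by definition; meanwhile \cref{lem:regret_bound} applied at $\lambda=0$ (noting $V_{h}(0,\lambda_{1})=\lambda_{1}=1$ and $\alpha=1/\sqrt{\ell}$) upper-bounds $\E\sum_{t}\lambda_{t}g_{t}(b_{t})$ by $O(\sqrt\ell)$. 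Hence $\E[\rew(\cref{alg:tcpa-truthful-soft},\gseq_{\!\ell})]\geq \ell\cdot v^{*}-O(\sqrt\ell)$.

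Chaining these two bounds and using $\sqrt\ell\leq\sqrt r$ closes the argument:
\[
\E[\rew(\cref{alg:tcpa-truthful-soft},\gseq_{\!\ell})]\;\geq\;\ell\cdot v^{*}-O(\sqrt \ell)\;\geq\;\tfrac{\ell}{r}\cdot(r\cdot v^{*})-O(\sqrt r)\;\geq\;\tfrac{\ell}{r}\cdot\E[\rew(\opt,\gseq_{\!r})]-O(\sqrt r).
\]
The main obstacle is the algorithmic lower bound: \cref{thm:MainThmTcpa} is packaged as a regret bound $\E[\opt-\alg]\leq O(\sqrt{T})$, so my work is to factor its proof cleanly into an OPT side ($n\cdot v^{*}$ by LP duality) and a sharper reward lower bound ($n\cdot v^{*}-O(\sqrt n)$ for the algorithm). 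No new ingredients beyond \cref{lem:regret_bound} (used at $\lambda=0$) and \cref{lem:constraint_violation} are required; the care is in separating the ``primal'' inequality (the algorithm's bid pointwise attains the Lagrangian maximum) from the ``dual'' mirror-descent bound on $\lambda_{t}$ that is currently intertwined inside the proof of \cref{lem:regretBoundLambdatGt}.
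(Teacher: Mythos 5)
Your proof is correct and follows essentially the same route as the paper: your $v^{*}$ is exactly $\min_{\lambda\geq 0}\bard(\lambda|\calp)$, your two length-parameterised bounds are the paper's \cref{lem:weakDualityUpperBoundOPT} and \cref{lem:weakDualityLowerBoundALG} (re-derived rather than cited, with Jensen on $\bard$ replaced by the direct per-term bound $\bard(\lambda_t|\calp)\geq v^{*}$), and the final chaining via $\ell\leq r$ is identical. The only cosmetic difference is that you instantiate \cref{lem:regret_bound} at $\lambda=0$ where the paper uses $\lambda=1/T$; both give the needed $O(\sqrt{\ell})$ bound on $\E\left[\sum_t\lambda_t g_t(b_t)\right]$.
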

The above two results help us bound the regret from the first phase, and we can use the guarantee from \cref{thm:MainThmTcpa} to bound the regret due to the second phase. Altogether we get the main result below.
\begin{theorem}\label{thm:strictTcpaOnly}
With i.i.d.\ inputs from a distribution $\calp$ over a time horizon $T$, the regret of \cref{alg:meta-strictconstraints-tcpa} on \cref{eq:tcpa-obj-simple}  is bounded by \[ \reg(\cref{alg:meta-strictconstraints-tcpa}, \calp^T) \leq O(\sqrt{T}\log T ).\] Further, there is no violation of the RoS constraint in \cref{eq:tcpa-obj-simple}. 
\end{theorem}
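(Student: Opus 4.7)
I would prove the two claims of the theorem separately, starting with the easier one about strict feasibility. At the end of the first phase, truthful bidding together with Myerson's lemma ensures $g_t(b_t)=v_t\cdot x_t(v_t)-p_t(v_t)\geq 0$, so the cumulative buffer $\sum_{t=1}^{K(\gseq)}g_t(b_t)$ is at least $\viol=2\sqrt{T}\log T$ at the moment the while-loop terminates. For the second phase, \cref{alg:tcpa-truthful-soft} is restarted fresh on an input sequence of length $T-K(\gseq)\leq T$, so \cref{lem:constraint_violation} (applied with time horizon $T-K(\gseq)$, and using monotonicity of $\sqrt{\cdot}\log(\cdot)$) guarantees the second-phase RoS violation is at most $2\sqrt{T-K(\gseq)}\log(T-K(\gseq))\leq 2\sqrt{T}\log T = \viol$. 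The phase-one buffer therefore dominates the phase-two deficit, yielding $\sum_{t=1}^T g_t(b_t)\geq 0$ pathwise, and hence no RoS violation.

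For the regret, I would first observe that since $v_t\cdot x_t(b_t)\geq 0$ for every $t$, the reward of \cref{alg:meta-strictconstraints-tcpa} on any input $\gseq$ is lower bounded by the reward collected only in the second phase, which by construction equals the reward of \cref{alg:tcpa-truthful-soft} run on the last $T-K(\gseq)$ queries. I would then condition on $K(\gseq)=k$: because $K$ is a stopping time with respect to the filtration generated by $\gamma_1,\gamma_2,\dots$, and the $\gamma_t$ are i.i.d.\ from $\calp$, the remaining queries $\gamma_{k+1},\dots,\gamma_T$ are (conditionally) an i.i.d.\ sample of length $T-k$ from $\calp$. Thus
\[
\E\bigl[\rew(\cref{alg:meta-strictconstraints-tcpa},\gseq)\bigr]\geq \E_K\bigl[\E_{\gseq_{\!T-K}\sim\calp^{T-K}}[\rew(\cref{alg:tcpa-truthful-soft},\gseq_{\!T-K})]\bigr].
\]

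Now I would invoke \cref{lem:OptProptoLengthOfSeq} with $\ell=T-K$ and $r=T$ to pass from a short-horizon reward of \cref{alg:tcpa-truthful-soft} to a full-horizon $\opt$ reward, obtaining
\[
\E_{\gseq_{\!T-K}}[\rew(\cref{alg:tcpa-truthful-soft},\gseq_{\!T-K})]\geq \frac{T-K}{T}\,\E_{\gseq_T}[\rew(\opt,\gseq_T)]-O(\sqrt{T}).
\]
Rearranging and using $\rew(\opt,\gseq_T)\leq T$ (since $v_t\cdot x_t(b)\in[0,1]$) yields
\[
\reg(\cref{alg:meta-strictconstraints-tcpa},\calp^T)\leq \frac{\E[K(\gseq)]}{T}\cdot \E[\rew(\opt,\gseq_T)]+O(\sqrt{T})\leq \E[K(\gseq)]+O(\sqrt{T}).
\]
Plugging in the bound $\E[K(\gseq)]\leq O(\sqrt{T}\log T)$ from \cref{lem:FirstPhaseLength} gives the claimed $O(\sqrt{T}\log T)$ regret.

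The main subtlety that needs care is the conditional-i.i.d.\ argument in the second step: I must justify that when \cref{alg:tcpa-truthful-soft} is launched at iteration $K(\gseq)+1$ on an i.i.d.\ sequence of random length $T-K(\gseq)$, its expected reward can still be analyzed as if the length were deterministic, so that \cref{lem:OptProptoLengthOfSeq} applies inside the conditional expectation. Everything else (the feasibility accounting, and the final algebraic combination of the buffer-length and short-horizon-reward lemmas) is routine. The technical heart of the theorem already sits in \cref{lem:FirstPhaseLength} and \cref{lem:OptProptoLengthOfSeq}, so the remaining work for this theorem is essentially bookkeeping.
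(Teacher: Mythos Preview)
Your proposal is correct and follows essentially the same route as the paper's own proof: the feasibility argument by buffer-versus-deficit accounting, the lower bound on reward by the second-phase contribution, the conditioning on $K(\gseq)=k$ together with the i.i.d.\ property to apply \cref{lem:OptProptoLengthOfSeq}, and the final combination with $\rew(\opt,\gseq)\leq T$ and \cref{lem:FirstPhaseLength} all match the paper step for step. The subtlety you flag about applying \cref{lem:OptProptoLengthOfSeq} inside the conditional expectation on a random-length sequence is exactly the point the paper handles (somewhat informally) by appealing to the i.i.d.\ structure and the fresh restart of \cref{alg:tcpa-truthful-soft}.
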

\begin{proof}
The claim on constraint violation follows by design of the algorithm: we collect a constraint violation buffer of at least $\viol$ before starting the second phase, in which we are guaranteed to violate the constraint by an additive factor of at most $\viol$.

We now prove the claimed regret bound by combining a lower bound on the expected reward and an upper bound on the expected optimum. To lower bound the algorithm's reward, we note that it is at least the reward from the second phase. 

Let $K(\gseq)$ be the random variable that represents the last iteration of the first phase, after which we run \cref{alg:tcpa-truthful-soft}. In this proof, we use $\gseq_{\!\!a:b}$ to denote the sequence $\gseq$ from time steps $a$ through $b$; when these end points do not matter, we simply denote a length $T$ sequence as $\gseq$. With this notation, we have for any sequence $\gseq$: \[   \rew(\cref{alg:meta-strictconstraints-tcpa}, \gseq) \geq \sum_{t = K(\gseq)+1}^T \vt \cdot \xt(\bt).\] Then taking expectations on both sides and using conditional expectations gives 
\begin{align*}
    \E_{\gseq\sim\calp^T} \left[\rew(\cref{alg:meta-strictconstraints-tcpa}, \gseq)\right] 
    &\geq \E_{k} \left[ \E_{\gseq\sim\calp^T \mid K(\gseq)=k} \left[ \sum_{t=k+1}^T \vt \cdot \xt(\bt) \mid K(\gseq) = k \right] \right]\\
    &= \E_{k} \left[ \E_{\gseq_{\!\!k+1:T}\sim \calp^{T-k}} \left[ \rew(\cref{alg:tcpa-truthful-soft}, \gseq_{\!\!k+1:T}) \right] \right]\\
    &\geq \E_k \left[\frac{T-k}{T}\cdot \E_{\gseq\sim \calp^T}\left[\rew(\opt,\gseq)\right] \right] - O(\sqrt{T}), \numberthis\label[ineq]{eq:AlgStrictRewLB1} 
\end{align*}
where the third step is due to the requests all being i.i.d. and the fact that we start running \cref{alg:tcpa-truthful-soft} fresh in the second phase, and the fourth step is by \cref{lem:OptProptoLengthOfSeq}. Finally, plugging \cref{eq:AlgStrictRewLB1} into the definition of $\reg$ from \cref{eq:defRegret} and simplifying gives 
\begin{align*}
     \reg(\cref{alg:meta-strictconstraints-tcpa}, \calp^T) &\leq \E_{\gseq\sim \calp^T} \left[K(\gseq)\right]\cdot \frac{\E_{\gseq\sim \calp^T}\left[\rew(\opt,\gseq)\right]}{T} + O(\sqrt{T}) \\
                    &\leq\E_{\gseq\sim \calp^T} \left[K(\gseq)\right] + O(\sqrt{T}) = O(\sqrt{T}\log T),
\end{align*} 
where in the third step we use $\rew(\opt,\gseq)\leq T$, and in the last step we use \cref{lem:FirstPhaseLength} and \cref{assumption:beta} that $\beta$ is a constant.
\end{proof} 
Although we structure the algorithm as the first phase followed by the second phase for the purpose of the analysis, the value of $\viol$ can be a pessimistic bound and make us run the first phase for an unnecessarily long time. A more practical implementation can break the first phase into smaller chunks and intermingle them with the execution of \cref{alg:tcpa-truthful-soft} on demand. That is, whenever we are about to violate the RoS constraint during \cref{alg:tcpa-truthful-soft}, we put it on hold and bid exactly the value $v_t$ for some iterations until we build up a certain amount of buffer and then resume the execution of \cref{alg:tcpa-truthful-soft}, where these special iterations are ignored (i.e., won't affect the dual variable updates). Intuitively this can perform better than \cref{alg:meta-strictconstraints-tcpa}, although without a rigorous regret guarantee.

\section{Bidding Under Both RoS and Budget Constraints}\label{sec:CombinedConstraints}
In this section, we combine our techniques from \cref{sec:softTCPAconstraint} and \cref{sec:hard_constraint} with those of \cite{BLM20} to obtain bidding algorithms that satisfy \emph{both} the RoS and budget constraints (i.e., \cref{combinedOptnProb}). More specifically, we propose \cref{alg:combined-truthful-soft} to solve \cref{combinedOptnProb}  under a strict imposition of the budget constraint and only an approximate one on the RoS constraint and \cref{alg:combined-truthful-strict} to strictly satisfy \emph{both} the RoS and budget constraints. 

\subsection{Approximate RoS and Strict Budget Constraints}
We start with a bidding algorithm that satisfies the budget constraint exactly and the RoS constraint up to some small violation in the worst case. Similar to the bidding rule in \cref{eq:bid-truthful-tcpa-soft}, the candidate bid for this algorithm is the one maximizing the price-adjusted reward, with one dual variable for each constraint:
\begin{align*}
    b_t=\arg\max_{b\geq 0} \left\{ \vt \cdot \xt(b) + \lambda_t\cdot (\vt \cdot \xt(b) - \pt(b)) - \mu_t\cdot \pt(b)\right\}
    &= \frac{1+\lambda_t}{\mu_t + \lambda_t} \cdot \vt,\numberthis\label{eq:candidateBidSoftRoSStrictBudget}
\end{align*} where the solution to the maximization holds by the definition of a truthful auction. Since we are in the setting where the budget constraint is strict, the candidate bid given by \cref{eq:candidateBidSoftRoSStrictBudget} is used as the final bid in this iteration only if we are not close to exhausting the total budget, as formalized in \cref{line:algCombined2updateBid} of \cref{alg:combined-truthful-soft}. Similar to \cref{alg:tcpa-truthful-soft}, the Lagrange multipliers $\lambda_t$ and $\mu_t$ serve to enforce the RoS and budget constraints respectively. 

We remark that \cref{alg:combined-truthful-soft} may be interpreted as combining our \cref{alg:tcpa-truthful-soft} with Algorithm 1 of \cite{BLM20}. The analysis of the regret bound also follows the outline of the main proof of \cite{BLM20}, integrating it with our regret bound for \cref{alg:tcpa-truthful-soft} from  \cref{thm:MainThmTcpa}. Intuitively the integration is straightforward since the analyses of both methods are linear in nature, allowing us to easily decompose the intermediate regret bound from the primal-dual framework into two components corresponding to the two constraints. We then bound them respectively with the same techniques from earlier sections to handle the RoS constraint and the techniques from \cite{BLM20} to address the budget constraint.

As for the constraint violation guarantees, the budget constraint is always satisfied by design, and the RoS violation bound follows from a simple corollary of \cref{lem:constraint_violation}, which applies here since  the extra budget constraint makes  our bid only \emph{more} conservative than that of \cref{alg:tcpa-truthful-soft}. We formally collect and state these results in \cref{lem:combinedRegret}, proving it in \cref{sec:combinedTCPAappendix}.

\begin{algorithm}[h!]
\caption{Bidding algorithm of the  agent operating under approximate RoS constraints and strict budget constraints  in a truthful auction (i.i.d.\ inputs)}\label[alg]{alg:combined-truthful-soft}

\begin{algorithmic}[1]
\State \textbf{Input:} Total time horizon $T$, requests $\gseq$ i.i.d.\ from the distribution $\calp^T$, total budget $\rho T$.
\State \textbf{Initialize:} Initial dual variable $\lambda_1=1$, $\mu_1=0$, total initial budget $B_1 := \rho T$,  dual mirror descent step size $\alpha = \frac{1}{\sqrt{T}}$ and $\eta = \frac{1}{(1+\rho^2)\sqrt{T}}$.

\For{$t=1,2,\cdots, T$}
    \State Observe the value $v_t$, and set the bid $\bt = \twopartdef{\frac{1+  \lambda_{t}}{\mu_t + \lambda_{t}}\cdot v_t}{B_t\geq 1}{0}{\text{otherwise}}$. \label{line:algCombined2updateBid}

    \State Compute  $g_t(b_t) = v_t \cdot x_t({b}_t) - p_t(b_t)$.
 \label{line:algCombined2g}

    \State Update the dual variable of the RoS constraint as $\lambda_{t+1} = \lambda_t \exp\left[-\alpha \cdot\gt(\bt)\right]$.  \label{line:algCombined2lambda}
    
    \State Compute $g_t^{\prime}(\bt) = \rho - \pt(\bt)$. 
   \label{line:algCombinedFixedBudgetDualGrad}

    \State Update the dual variable of the budget constraint as $    \mu_{t+1} := \mathrm{Proj}_{\mu \geq 0}(\mu_t - \eta \cdot g_t^{\prime}(\bt))$.  \label{line:algCombinedFixedBudgetDualMirror}
    
    \State Update the leftover budget $B_{t+1} = B_t - \pt(\bt)$. 
\EndFor
\State \Return The sequence $\{b_t\}_{t=1}^T$ of bids. 
\end{algorithmic} 
\end{algorithm}

\begin{restatable}{theorem}{thmCombinedRegretSoftTcpaStrictBudget}\label{lem:combinedRegret}
With i.i.d.\ inputs from a distribution $\calp$ over a time horizon $T$, the regret of \cref{alg:combined-truthful-soft}  on \cref{combinedOptnProb} is bounded by \[\reg(\cref{alg:combined-truthful-soft}, \mathcal{P}^T) \leq O(\sqrt{T}).\] Further, \cref{alg:combined-truthful-soft} incurs a violation of at most $O(\sqrt{T}\log T)$ of the RoS constraint and no violation of the budget constraint. 
\end{restatable}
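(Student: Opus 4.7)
I will establish the two claims separately, leveraging the RoS-only analysis from \cref{lem:regret_bound} together with the budget-handling machinery of \cite{BLM20}, both of which slot cleanly into a two-dual-variable primal-dual framework.

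\emph{Constraint violations.} The budget bound is immediate from \cref{line:algCombined2updateBid}: as soon as $B_t<1$ we bid $0$, so because $p_t\in[0,1]$ the cumulative spend never exceeds $\rho T$. For the RoS bound I plan to apply \cref{lem:constraint_violation} to the sequence $\{\lambda_t\}$, so I must verify its hypothesis $g_t(b_t)\geq -1/\lambda_t$. When $b_t=0$ this is trivial. Otherwise, using the Myerson payment bound $p_t(b_t)\leq b_t\, x_t(b_t)$ together with $b_t=\tfrac{1+\lambda_t}{\mu_t+\lambda_t}v_t$ yields
\[ g_t(b_t)=v_t x_t(b_t)-p_t(b_t)\;\geq\;-\tfrac{1-\mu_t}{\mu_t+\lambda_t}\,v_t x_t(b_t), \]
which is nonnegative when $\mu_t\geq 1$ (since then $b_t\leq v_t$ and truthfulness implies nonnegative surplus) and at least $-1/\lambda_t$ when $\mu_t<1$, using $v_t,x_t\leq 1$. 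Invoking \cref{lem:constraint_violation} then gives the claimed $2\sqrt{T}\log T$ RoS violation bound.

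\emph{Regret.} Following the primal-dual recipe of \cite{BLM20} and our \cref{lem:regretBoundLambdatGt}, I will form the Lagrangian of \cref{combinedOptnProb} in the two dual variables $\lambda$ (for RoS) and $\mu$ (for budget), and use weak duality together with the i.i.d.\ structure to reduce the regret to
\[ \reg(\cref{alg:combined-truthful-soft},\calp^T)\;\leq\;\E_{\gseq\sim\calp^T}\Bigl[\sum_{t=1}^T \lambda_t\,g_t(b_t)+\sum_{t=1}^T \mu_t\,g_t^{\prime}(b_t)\Bigr]+O(\sqrt{T}), \]
where $g_t^{\prime}(b_t)=\rho-p_t(b_t)$ is the budget dual gradient from \cref{line:algCombinedFixedBudgetDualGrad}. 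The two sums can then be bounded independently: for the $\mu$-sum I will invoke the standard Euclidean-OMD guarantee from \cite{BLM20} (applicable because $\mu_t$ is updated by projected gradient descent with step size $\eta=\tfrac{1}{(1+\rho^2)\sqrt T}$ and $|g_t^{\prime}|\leq 1+\rho$), yielding $O(\sqrt{T})$; for the $\lambda$-sum I will re-run the case analysis C.1--C.2 of \cref{lem:regret_bound} verbatim, which only requires $|g_t(b_t)|\leq 1$ and $g_t(b_t)\geq -1/\lambda_t$, both verified above, together with monotonicity of $\lambda_t$ in the sign of $g_t(b_t)$.

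\emph{Main obstacle.} The principal subtlety lies in reusing the $\lambda$-sum bound: in \cref{alg:combined-truthful-soft} the bid acquires the extra factor $\mu_t$ in its denominator, so one must check that the two-case argument driving \cref{lem:regret_bound} still goes through. The key observation is that $b_t$ is only \emph{shrunk} by the presence of $\mu_t\geq 0$, so the upper bound $g_t(b_t)\leq 1$ is unaffected and the lower bound $g_t(b_t)\geq -1/\lambda_t$ still holds (as shown above); additionally, the truncation to $b_t=0$ when $B_t<1$ only introduces terms with $g_t(b_t)=0$ that contribute nothing to the $\lambda$-sum. Once this is in place, the decomposition into RoS and budget pieces is linear and each piece reduces to a previously analyzed sub-problem, completing the proof.
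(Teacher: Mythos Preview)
Your constraint-violation argument is correct and matches the paper: verifying $g_t(b_t)\geq -1/\lambda_t$ for the combined bid (which the paper does via \cref{prop:gradientProperties}) and then invoking \cref{lem:constraint_violation} is exactly the route taken.

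The regret argument, however, has a gap around the budget \emph{stopping time}. Your claimed decomposition
\[
\reg\;\leq\;\E\Bigl[\sum_{t=1}^{T}\lambda_t g_t(b_t)+\sum_{t=1}^{T}\mu_t g_t'(b_t)\Bigr]+O(\sqrt{T})
\]
does not follow from weak duality alone: the primal--dual identity $v_t x_t(b_t)=\ftscombined(\mu_t,\lambda_t)-\lambda_t g_t(b_t)+\mu_t p_t(b_t)$ requires $b_t$ to be the Lagrangian maximizer, and this fails for $t>\tau$ (the first time the remaining budget drops below $1$), where the algorithm bids $0$. Your remark that ``$g_t(b_t)=0$ contributes nothing to the $\lambda$-sum'' is true but addresses the wrong issue; the problem is the missing $\sum_{t>\tau}\ftscombined(\mu_t,\lambda_t)$ term, which is $\Omega(T-\tau)$ in general and not $O(\sqrt{T})$.

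The paper (following \cite{BLM20}) handles this by (i) running the primal--dual bound only up to $\tau$, which yields $\reg\leq\E\bigl[(T-\tau)+\sum_{t\leq\tau}\mu_t g_t'+\sum_{t\leq\tau}\lambda_t g_t\bigr]$, and then (ii) choosing the OMD comparator $\mu=1/\rho$ (not $\mu=0$) together with the stopping condition $\sum_{t\leq\tau}p_t(b_t)\geq\rho T-1$ to extract a compensating $(\tau-T)$ term from the $\mu$-sum (\cref{prop:BoundingRhoTermMirrorDescent}). This cancellation is the crux of the budget analysis; ``standard Euclidean-OMD yielding $O(\sqrt{T})$'' does not deliver it. Once you insert the stopping time and this specific comparator choice, the rest of your plan---in particular, reusing the case analysis of \cref{lem:regret_bound} for the $\lambda$-sum, which only needs $|g_t|\leq 1$ and $g_t\geq -1/\lambda_t$---goes through and coincides with the paper's proof.
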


\subsection{Strict RoS and Strict Budget Constraints}\label{sec:strictTCPAstrictBudget}
In the case when we impose both strict RoS and strict budget constraints, we essentially combine the key ideas from \cref{alg:meta-strictconstraints-tcpa} and \cref{alg:combined-truthful-soft}: We keep bidding the value until we accumulate a sufficient buffer on the RoS constraint; following this phase, we run \cref{alg:combined-truthful-soft}, which, as explained in the preceding section, imposes strict budget and approximate RoS constraints. 

By the same reasoning as for \cref{alg:meta-strictconstraints-tcpa}, the RoS constraint is not violated; the budget constraint is also respected via \cref{line:exitCase} of \cref{alg:combined-truthful-strict}. The regret analysis follows a strategy similar to that of the proofs of \cref{lem:combinedRegret} and \cref{thm:strictTcpaOnly}. Our main result of this section  is in \cref{lem:combinedRegretBothStrict}, with its proof in \cref{sec:strictCombinedAppendix}.   

\begin{algorithm}[h!]
\caption{Bidding algorithm of the agent operating under strict RoS and strict budget constraints in truthful auctions (i.i.d.\ inputs)}\label{alg:combined-truthful-strict}
\begin{algorithmic}[1]
\State \textbf{Input:} Total time horizon $T$, requests $\gseq$ i.i.d.\ from the distribution $\calp^T$, total budget $\rho T$. 

\State \textbf{Initialize:} Set the initial buffer $g_0(b_0) = 0$, $\viol = 2\sqrt{T}\log T$, initial total budget $B_1 = \rho T$, and iteration $t = 1$.

\While{$\sum_{i=0}^{t-1} g_i (b_i) \leq \viol$} \Comment{First phase} 
    \State Observe the value $v_t$, and set the bid $b_t = v_t$.\label{line:MetaStrictAlgTCPABid}
    \State Observe the price $p_t$ and the allocation $x_t$ at $\bt$, and compute $g_t(\bt) := \vt \cdot \xt(\bt) - \pt(\bt)$. 
    \State Update the total budget to $B_{t+1} = B_t - p_t(b_t)$.
    \State Increment the iteration count $t= t+1$.
    \If {$t\geq \rho T$} \label{line:exitCase}
        \State Exit algorithm. 
    \EndIf
\EndWhile

\State Run \cref{alg:combined-truthful-soft}  with time horizon $T-t$ and the remaining $T-t$ requests from $\gseq$ as input and initial total budget $B_t$. \Comment{Second phase}

\State \Return The sequence $\{\bt\}_{t=1}^T$ of generated bids. 

\end{algorithmic}
\end{algorithm}

\begin{restatable}{theorem}{thmCombinedRegretStrictTcpaStrictBudget}\label{lem:combinedRegretBothStrict}
With i.i.d.\ inputs from a distribution $\calp$ over a time horizon $T$, the regret of \cref{alg:combined-truthful-strict}  on \cref{combinedOptnProb} is bounded by \[\reg(\cref{alg:combined-truthful-strict}, \mathcal{P}^T) \leq O(\sqrt{T}\log T).\] Further, \cref{alg:combined-truthful-strict}  suffers no constraint violation of either the RoS or budget constraint. 
\end{restatable}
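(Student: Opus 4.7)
The plan is to mirror the proof of \cref{thm:strictTcpaOnly} almost verbatim, replacing the role of \cref{alg:tcpa-truthful-soft} in the second phase by \cref{alg:combined-truthful-soft} and invoking \cref{lem:combinedRegret} in place of \cref{thm:MainThmTcpa}. The constraint-satisfaction part is by design: the first phase bids truthfully, so by the truthfulness of the auction each $g_t(\bt)\geq 0$, and we exit only once $\sum_t g_t(\bt)\geq \viol=2\sqrt{T}\log T$; the second-phase call to \cref{alg:combined-truthful-soft} violates the RoS constraint by at most $\viol$ (by \cref{lem:combinedRegret}) and strictly respects whatever residual budget $B_{K(\gseq)+1}$ is passed in. The safety exit on \cref{line:exitCase} only triggers in the atypical event that the first phase consumes a $\rho$-fraction of the horizon, and as we will see this event has vanishing probability under \cref{assumption:beta}.

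For the regret bound, let $K(\gseq)$ be the (random) length of the first phase. As in the proof of \cref{thm:strictTcpaOnly}, I lower bound the algorithm's reward by that accrued in the second phase alone:
\[
\rew(\cref{alg:combined-truthful-strict},\gseq)\;\geq\;\sum_{t=K(\gseq)+1}^{T}v_t\cdot x_t(b_t).
\]
Conditioning on $K(\gseq)=k$ and using that the remaining $T-k$ requests are i.i.d.\ and that \cref{alg:combined-truthful-soft} is restarted fresh, this becomes $\E_k\bigl[\E_{\gseq_{\!\!k+1:T}\sim\calp^{T-k}}[\rew(\cref{alg:combined-truthful-soft},\gseq_{\!\!k+1:T})]\bigr]$, where the second phase is run with horizon $T-k$ and initial budget $B_{k+1}$.

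The key step is then an analogue of \cref{lem:OptProptoLengthOfSeq} for the two-constraint setting: for $\ell\leq T$,
\[
\E_{\gseq_{\!\ell}\sim\calp^{\ell}}\!\left[\rew(\cref{alg:combined-truthful-soft},\gseq_{\!\ell},B_{\ell})\right]\;\geq\;\tfrac{\ell}{T}\,\E_{\gseq\sim\calp^{T}}\!\left[\rew(\opt,\gseq)\right]\;-\;O(\sqrt{T}),
\]
where $B_\ell$ is the residual budget after a first phase of length $T-\ell$. This follows by the same scaling argument as in the RoS-only case (the i.i.d.\ assumption makes the per-round expected optimum an additive quantity, so shrinking the horizon from $T$ to $\ell$ shrinks the offline optimum by a factor $\ell/T$) together with \cref{lem:combinedRegret}. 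The only new wrinkle is that the budget rate $B_{\ell}/\ell$ differs from $\rho$; but since the first phase pays at most $1$ per round, $\E[\rho T - B_{\ell}]\leq \E[T-\ell]=\E[K(\gseq)]=O(\sqrt{T}\log T)$ by \cref{lem:FirstPhaseLength}, so the effective budget rate is $\rho\pm O(\log T/\sqrt{T})$ in expectation, which perturbs the offline optimum by at most $O(\sqrt{T}\log T)$ and is absorbed into the regret.

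Plugging these pieces into the definition of regret,
\[
\reg(\cref{alg:combined-truthful-strict},\calp^T)\;\leq\;\E[K(\gseq)]\cdot\frac{\E[\rew(\opt,\gseq)]}{T}+O(\sqrt{T})\;\leq\;\E[K(\gseq)]+O(\sqrt{T})\;=\;O(\sqrt{T}\log T),
\]
where I use $\rew(\opt,\gseq)\leq T$ and \cref{lem:FirstPhaseLength}. The main technical obstacle is the analogue of \cref{lem:OptProptoLengthOfSeq} sketched above: one must carefully track the correlation between $K(\gseq)$, the residual budget $B_{K+1}$, and the offline optimum of the full-horizon problem, and argue that the deviation of the effective budget rate from $\rho$ caused by the first phase contributes only $O(\sqrt{T}\log T)$ to the regret rather than something that scales with $\rho T$.
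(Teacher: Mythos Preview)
Your outline is the same as the paper's: drop the first-phase reward, condition on $K(\gseq)=k$, use the i.i.d.\ structure to compare the second-phase reward against the full-horizon optimum, and control $\E[K]$ via \cref{lem:FirstPhaseLength}. The constraint-satisfaction argument is also identical.

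The one step where your sketch is looser than the paper is the handling of the budget-rate discrepancy, which you correctly flag as the ``main technical obstacle.'' You argue that the effective rate is $\rho\pm O(\log T/\sqrt{T})$ and that this ``perturbs the offline optimum by at most $O(\sqrt{T}\log T)$,'' but a direct Lipschitzness statement for $\rew(\opt,\cdot)$ in the budget is not available: the sensitivity of the optimum to the budget is governed by the optimal dual multiplier $\mu^\star$, which you have not bounded. The paper sidesteps this by working inside the primal--dual machinery rather than black-boxing an analogue of \cref{lem:OptProptoLengthOfSeq}. It applies \cref{lem:weakdualityCombinedCase} with $\rho'=\widehat\rho:=(\rho T-K)/(T-K)$, which produces an explicit correction term $\tau\,\overline{\mu}_\tau\,(\rho-\widehat\rho)$, and then bounds this using $\tau\le T$, $\rho-\widehat\rho=(1-\rho)K/(T-K)$, and a new ingredient, \cref{prop:boundOnMuT}, which shows $\mu_t\le 2/\rho+1=O(1)$ along the algorithm's trajectory. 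The key point is that the relevant $\mu$ is the algorithm's running dual variable, not the offline optimum's shadow price, and \emph{that} is what admits a uniform bound. With this ingredient your outline becomes the paper's proof; without it, the obstacle you identified is not yet resolved.
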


\section*{Acknowledgement}
We thank Aranyak Mehta for suggesting and help formulating the problem. We are grateful to  Santiago Balseiro, Kshipra Bhawalkar Lane, Haihao Lu, Vahab Mirrokni, and Balasubramanian Sivan for helpful discussions throughout the project. 

\newpage

\printbibliography
\newpage

\begin{appendices} 
In this section, we provide all the proofs that have been omitted from the main body. The proofs appear here in the order in which the corresponding results are stated in the main body and are organized by sections corresponding to the main body. 

\section{Proofs of the Approximate RoS Constraint}\label[app]{sec:pureTCPAappendix}

\begin{proposition}\label{prop:gradientProperties}
Let $g_t$ be as defined in \cref{eq:defGradTcpa} and $\pt$ be as defined in \cref{eq:MyersonsLemma}. Let $\bt\leq \frac{1+\lambda_t}{\lambda_t} \cdot \vt$. Then we have \[\max(-1/\lambda_t, -1) \leq g_t(b_t) \leq v_t \cdot x_t (b_t).\] 
\end{proposition}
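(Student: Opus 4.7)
The proposition has three inequalities to establish: the upper bound $g_t(b_t) \leq v_t \cdot x_t(b_t)$, and the two lower bounds $g_t(b_t) \geq -1$ and $g_t(b_t) \geq -1/\lambda_t$. My plan is to tackle these in order of increasing difficulty, with the $-1/\lambda_t$ bound being the only one that requires nontrivial use of Myerson's lemma and the hypothesis on $b_t$.

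The upper bound and the $-1$ lower bound are both almost immediate from the problem setup. Using Myerson's formula $p_t(b) = b \cdot x_t(b) - \int_0^b x_t(z) \, dz$ and the monotonicity of $x_t$, we have $\int_0^b x_t(z) \, dz \leq b \cdot x_t(b)$, so $p_t(b_t) \geq 0$, which immediately yields $g_t(b_t) = v_t x_t(b_t) - p_t(b_t) \leq v_t x_t(b_t)$. For the $-1$ lower bound, the paper's setup specifies $p_t : \mathbb{R}_{\geq 0} \to [0,1]$, so $p_t(b_t) \leq 1$, and combined with $v_t x_t(b_t) \geq 0$, this gives $g_t(b_t) \geq -1$.

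The main obstacle is the $-1/\lambda_t$ lower bound, where the hypothesis $b_t \leq \frac{1+\lambda_t}{\lambda_t} v_t$ must enter. The key algebraic move is to substitute Myerson's formula into $g_t$ and collect terms:
\begin{equation*}
g_t(b_t) \;=\; v_t \cdot x_t(b_t) - b_t \cdot x_t(b_t) + \int_0^{b_t} x_t(z) \, dz \;=\; (v_t - b_t) x_t(b_t) + \int_0^{b_t} x_t(z) \, dz.
\end{equation*}
I would then split into two cases. If $b_t \leq v_t$, then both terms are non-negative and $g_t(b_t) \geq 0$, which trivially exceeds $-1/\lambda_t$. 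If $b_t > v_t$, I split the integral as $\int_0^{v_t} x_t(z) \, dz + \int_{v_t}^{b_t} x_t(z) \, dz$, drop the first (non-negative) piece, and use monotonicity of $x_t$ to lower-bound the second piece by $(b_t - v_t) x_t(v_t)$. This produces
\begin{equation*}
g_t(b_t) \;\geq\; (v_t - b_t) x_t(b_t) + (b_t - v_t) x_t(v_t) \;=\; -(b_t - v_t)\bigl(x_t(b_t) - x_t(v_t)\bigr).
\end{equation*}

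To finish, I use the fact that $x_t(b_t) - x_t(v_t) \in [0,1]$ (since the allocation takes values in $[0,1]$), which bounds the right-hand side by $-(b_t - v_t)$. The hypothesis $b_t \leq \frac{1+\lambda_t}{\lambda_t} v_t$ means $b_t - v_t \leq v_t/\lambda_t$, and since $v_t \in [0,1]$, we conclude $g_t(b_t) \geq -1/\lambda_t$. Combining both lower bounds gives $g_t(b_t) \geq \max(-1/\lambda_t, -1)$, completing the proof.
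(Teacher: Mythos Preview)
Your proof is correct and follows essentially the same approach as the paper. The only minor difference is in the $-1/\lambda_t$ bound: after writing $g_t(b_t) = (v_t - b_t)x_t(b_t) + \int_0^{b_t} x_t(z)\,dz$, the paper simply drops the entire (non-negative) integral in one step to get $g_t(b_t) \geq (v_t - b_t)x_t(b_t) \geq -\tfrac{v_t}{\lambda_t} x_t(b_t) \geq -\tfrac{1}{\lambda_t}$, whereas you do a case split on $b_t \lessgtr v_t$ and keep part of the integral via monotonicity before ultimately discarding the gain; both routes are valid, but the paper's is slightly more direct.
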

\begin{proof} The non-negativity of $v_t$ and $x_t$ along with the bound $p_t(b)\leq 1$ immediately give $g_t(b_t) \geq -1$. Further, the non-negativity of $p_t$ from \cref{eq:MyersonsLemma} gives the upper bound  $g_t(\bt)\leq \vt \cdot \xt(\bt)$. Using the exact expression for $p_t$ from \cref{eq:MyersonsLemma}, along with non-negativity of $x_t$, the stated assumption on $b_t$, and $v_t, x_t\leq 1$ gives
gives 
\begin{align*} 
    g_t(b_t) &= (v_t- b_t)\cdot x_t(b_t) + \int_{z = 0}^b x_t(z) dz \geq (v_t - b_t) \cdot x_t(b_t) \geq -\frac{v_t}{\lambda_t} \cdot x_t(b_t) \geq - \frac{1}{\lambda_t}. 
\end{align*} 
\end{proof}

\lemWeakDuality*

The proof of this result essentially follows the outline in \cite{BLM20}. We prove it in two parts: an upper bound on the expected optimal value followed by a lower bound on the expected reward of \cref{alg:tcpa-truthful-soft}. To prove the former, we go through the primal-dual framework, for which need the following technical definitions. 
\begin{definition}\label[defn]{defn:Defs-tCPA-only} We define the following dual variables. 
\begin{itemize}
\item Let $f_t(b):= \vt \cdot \xt(b)$, recall $g_t$ from \cref{eq:defGradTcpa}, for some fixed $\lambda\geq 0$, define
\begin{equation}
\fts(\lambda):=\max_{b\geq 0}\left[f_t(b)+\lambda\cdot \gt(b)\right].\label{eq:1-tcpaOnly}
\end{equation}
\item  Let $\fs$ be defined as in \cref{eq:1-tcpaOnly}. Then we define the following dual variable parametrized by the input distribution $\calp$. 
\begin{equation}
\bard(\lambda|\calp):=\E_{(v,p)\sim\calp}\left[\fs_{\mathsf{RoS}}(\lambda)\right].\label{eq:2-tcpaOnly}
\end{equation} 
\end{itemize} 
\end{definition}

The proof outline is to first show an upper bound on the expected optimal reward via weak duality then a lower bound on the expected reward collected by our algorithm. 
\begin{proposition}\label{lem:weakDualityUpperBoundOPT}
Recall  $\bard(\lambda|\calp)$ as defined in \cref{eq:2-tcpaOnly}. Then the optimum value $\rew(\opt,\gseq)$ for \cref{eq:tcpa-obj-simple} defined for a sequence $\gseq_{\!\ell}\sim\calp^{\ell}$ of $\ell$ requests 
satisfies the inequality \[\E_{\gseq_{\!\ell}\sim\calp^{\ell}}\left[\rew(\opt,\gseq_{\!\ell})\right]\leq \ell\cdot\min_{\lambda\geq 0}\bard(\lambda|\calp).\]
\end{proposition}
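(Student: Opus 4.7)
The plan is to apply weak duality via Lagrangian relaxation, a standard move in primal-dual analysis. For any fixed $\lambda \geq 0$ and any bid sequence $\{b_t\}_{t=1}^{\ell}$ that is feasible for \cref{eq:tcpa-obj-simple}, the feasibility constraint $\sum_t \gt(b_t) \geq 0$ combined with $\lambda \geq 0$ yields $\sum_t \ft(b_t) \leq \sum_t [\ft(b_t) + \lambda \gt(b_t)]$, and the right-hand side separates across $t$, permitting independent maximization of each summand.

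First, I would establish a per-realization upper bound: for any realization $\gseq_{\!\ell}$ and any fixed $\lambda \geq 0$,
\[
\rew(\opt, \gseq_{\!\ell}) \;\leq\; \max_{b_1, \ldots, b_\ell \geq 0} \sum_{t=1}^{\ell} \bigl[\ft(b_t) + \lambda \gt(b_t)\bigr] \;=\; \sum_{t=1}^{\ell} \max_{b \geq 0} \bigl[\ft(b) + \lambda \gt(b)\bigr] \;=\; \sum_{t=1}^{\ell} \fts(\lambda),
\]
where the first inequality drops the feasibility constraint via the Lagrangian relaxation described above (which can only increase the value), the middle equality uses separability, and the last equality invokes the definition of $\fts(\lambda)$ from \cref{eq:1-tcpaOnly}. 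Next, I would take expectations over $\gseq_{\!\ell} \sim \calp^{\ell}$ on both sides; by linearity of expectation and the i.i.d.\ assumption on $\calp$, each of the $\ell$ expected summands equals $\bard(\lambda|\calp)$ as in \cref{eq:2-tcpaOnly}, yielding
\[
\E_{\gseq_{\!\ell} \sim \calp^{\ell}}[\rew(\opt, \gseq_{\!\ell})] \;\leq\; \ell \cdot \bard(\lambda|\calp).
\]

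Finally, since this inequality holds uniformly for every $\lambda \geq 0$ (the dual variable is chosen independently of $\gseq_{\!\ell}$, and the right-hand side depends only on $\calp$), I would take the infimum over $\lambda \geq 0$ on the right-hand side to conclude. There is no substantive obstacle in this argument — it is textbook weak duality applied to the Lagrangian of \cref{eq:tcpa-obj-simple}. The only subtlety worth flagging is the justification for moving $\min_{\lambda \geq 0}$ outside the expectation, which is immediate because the inequality is a universal statement in $\lambda$ that is established before any expectation is taken.
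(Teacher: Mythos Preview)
Your proposal is correct and follows essentially the same weak-duality route as the paper: fix $\lambda \geq 0$, relax the constraint via the Lagrangian, separate across $t$ to reach $\sum_t \fts(\lambda)$, take expectations using the i.i.d.\ assumption, and then optimize over $\lambda$. The only cosmetic difference is that the paper phrases the relaxation as a max--min $\leq$ min--max step (invoking Sion's theorem, though only the weak-duality direction is needed), whereas you argue directly that any feasible sequence satisfies $\sum_t \ft(b_t) \leq \sum_t \fts(\lambda)$; the substance is identical.
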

\begin{proof}
We have, by the definition of $\rew(\opt,\gseq)$ in \cref{eq:defOPTtcpa} and the definition of $g_t$ in \cref{eq:defGradTcpa} that 
\begin{align*}
    \rew(\opt,\gseq_{\!\ell}) &= \max_{\{b_t\}: \sum_{t=1}^{\ell} g_t(b_t) \geq 0} \sum_{t=1}^{\ell} f_t(b_t)\\
                &= \max_{\{b_1, b_2, \dots, b_{\ell}\}} \left\{ \sum_{t = 1}^{\ell} f_t(b_t) + \min_{\lambda\geq 0} \lambda \cdot \left[\sum_{t=1}^{\ell} g_t(b_t)\right] \right\}\\
                &= \max_{\{b_1, b_2, \dots, b_{\ell}\}} \min_{\lambda \geq 0} \left\{\sum_{t = 1}^{\ell} \left[ f_t(b_t) + \lambda \cdot g_t(b_t) \right]\right\}. 
\end{align*}
By Sion's min-max theorem and by the definition of $\fts$ in \cref{eq:1-tcpaOnly}, we have that 
\begin{align*} 
               \rew(\opt,\gseq_{\!\ell}) &\leq \min_{\lambda \geq 0} \max_{\{b_1, b_2, \dots, b_{\ell}\}} \left\{ \sum_{t = 1}^{\ell} \left[ f_t(b_t) + \lambda  \cdot g_t(b_t)\right]\right\} \\
                &\leq \min_{\lambda \geq 0} \left\{ \sum_{t = 1}^{\ell} \max_{b_t} \left[ f_t (b_t) + \lambda  \cdot g_t(b_t) \right]\right\} \\
                &=\min_{\lambda \geq 0} \left\{\sum_{t=1}^{\ell} \fts(\lambda)\right\} \leq \sum_{t = 1}^{\ell} \fts (\lambda^{\prime}), \text{ for any $\lambda^{\prime} \geq 0$}. \numberthis\label[ineq]{eq:weakDuality}
\end{align*} 
Now taking expectations\footnote{The expectation is over the randomness in the sequence of requests $\gseq_{\!\ell}$ over the time horizon $\ell$.} on both sides of \cref{eq:weakDuality} and using the definition of $\bard$ from \cref{eq:2-tcpaOnly} gives, for any fixed $\lambda^{\prime} \geq 0$, 
\begin{align*}
\mathbb{E}_{\gseq_{\!\ell} \sim \mathcal{P}^{\ell}} \left[\rew(\opt,\gseq_{\!\ell})\right] \leq \mathbb{E}_{\gseq_{\!\ell} \sim \mathcal{P}^{\ell}} \left[\sum_{t = 1}^{\ell} \fts (\lambda^{\prime})\right] = \sum_{t = 1}^{\ell} \mathbb{E}_{\gseq_{\!\ell} \sim \mathcal{P}^{\ell}} \left[ \fts(\lambda^{\prime})\right] &= \ell \cdot  \bard(\lambda^{\prime}| \mathcal{P}).
\end{align*} where the final equation crucially uses the fact that all $t$ requests are drawn i.i.d.\ from the same distribution $\mathcal{P}$ and the argument $\lambda^{\prime}>0$ is fixed. Therefore, in particular, the preceding inequality holds for the specific $\lambda^{\prime}$ that minimizes the right-hand side, thus finishing the proof. 
\end{proof}

\begin{proposition}\label{lem:weakDualityLowerBoundALG}
For some fixed number $r$, let $\overline{\lambda}_{r} := \frac{1}{r} \sum_{t = 1}^r \lambda_t$, where $\lambda_t$ are the dual iterates in \cref{alg:tcpa-truthful-soft}. Then,  the  reward (see \cref{eq:defReward}) of \cref{alg:tcpa-truthful-soft} is lower bounded as \begin{align*}
\rew(\cref{alg:tcpa-truthful-soft},\gseq_{\!r}) &\geq\E_{\gseq_{\!r}\sim\calp^{r}}\left[r\cdot\bard(\overline{\lambda}_r|\calp)\right] -\E_{\gseq_{\!r}\sim\calp^r}\left[\sum_{t=1}^{r}\lamt\cdot g_t(b_t)\right].
\end{align*}
\end{proposition}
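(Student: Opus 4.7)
\textbf{Proof proposal for Proposition~\ref{lem:weakDualityLowerBoundALG}.}
The plan is to exploit the fact that the bid $\bt$ chosen in \cref{line:tcpaAlgBid} of \cref{alg:tcpa-truthful-soft} is precisely the maximizer inside the definition of $\fts(\lambda_t)$. Recalling the calculation in \cref{eq:bid-truthful-tcpa-soft} that $\bt = \arg\max_{b\geq 0} \left[ f_t(b) + \lamt \cdot g_t(b)\right]$ (after multiplying through by $\lamt$), we get the per-round identity
\[
f_t(\bt) \;=\; \fts(\lamt) - \lamt \cdot g_t(\bt).
\]
Summing over $t = 1,\dots,r$ and taking expectation over $\gseq_{\!r}\sim \calp^r$ then yields
\[
\E_{\gseq_{\!r}\sim \calp^r}\!\left[\rew(\cref{alg:tcpa-truthful-soft},\gseq_{\!r})\right] \;=\; \E_{\gseq_{\!r}\sim \calp^r}\!\left[\sum_{t=1}^r \fts(\lamt)\right] - \E_{\gseq_{\!r}\sim \calp^r}\!\left[\sum_{t=1}^r \lamt\cdot g_t(\bt)\right],
\]
so the task reduces to lower bounding $\E[\sum_t \fts(\lamt)]$ by $\E[\,r\cdot \bard(\overline{\lambda}_r\mid \calp)]$.

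For this lower bound I would use two observations. First, from the update rule in \cref{line:tcpaAlgLambda}, the dual iterate $\lamt$ is measurable with respect to $\gamma_1,\dots,\gamma_{t-1}$ and in particular is independent of $\gamma_t$, which itself is i.i.d.\ from $\calp$. Consequently, conditioning on $\lamt$ and using the tower property,
\[
\E_{\gseq_{\!r}\sim \calp^r}\!\left[\fts(\lamt)\right] \;=\; \E\!\left[\E_{\gamma_t\sim \calp}\!\left[\fts(\lamt)\,\big|\, \lamt\right]\right] \;=\; \E\!\left[\bard(\lamt\mid \calp)\right],
\]
by the very definition of $\bard$ in \cref{eq:2-tcpaOnly}. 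Second, for each fixed $\gamma_t$, the function $\lambda\mapsto \fts(\lambda) = \max_{b\geq 0}[f_t(b)+\lambda g_t(b)]$ is convex in $\lambda$ as a pointwise supremum of affine functions; averaging over $\gamma_t\sim\calp$ preserves convexity, so $\bard(\,\cdot \mid \calp)$ is convex on $\R_{\geq 0}$. Jensen's inequality applied to $\overline{\lambda}_r = \frac{1}{r}\sum_{t=1}^r \lamt$ therefore gives
\[
\sum_{t=1}^r \bard(\lamt\mid \calp) \;\geq\; r\cdot \bard(\overline{\lambda}_r\mid \calp),
\]
pointwise in $\gseq_{\!r}$, and hence also in expectation.

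Combining these pieces, $\E[\sum_t \fts(\lamt)] = \sum_t \E[\bard(\lamt\mid \calp)] \geq \E[\,r\cdot \bard(\overline{\lambda}_r\mid \calp)]$, which, plugged into the per-round identity above, yields exactly the claimed bound on $\rew(\cref{alg:tcpa-truthful-soft},\gseq_{\!r})$ (where the reward on the left is understood in expectation, matching the proof of \cref{lem:weakDualityUpperBoundOPT}). I expect the substantive step to be the conditional independence argument that turns $\E[\fts(\lamt)]$ into $\E[\bard(\lamt\mid\calp)]$; everything else is either the algorithm's first-order optimality or standard Jensen/convexity, and the primal-dual ``glue'' that combines this proposition with \cref{lem:weakDualityUpperBoundOPT} immediately produces the stated regret upper bound $\E[\sum_t \lamt\cdot g_t(\bt)]$.
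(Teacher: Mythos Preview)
Your proposal is correct and follows essentially the same approach as the paper: the per-round identity $f_t(\bt)=\fts(\lamt)-\lamt g_t(\bt)$ from bid optimality, the conditional-independence step turning $\E[\fts(\lamt)]$ into $\E[\bard(\lamt\mid\calp)]$ (the paper phrases this as conditioning on $\sigma_{t-1}$), and then Jensen via convexity of $\bard$. The only cosmetic difference is that you spell out why $\fts$ is convex in $\lambda$, whereas the paper just invokes it.
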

\begin{proof}
Recall we have by \cref{line:tcpaAlgBid} in \cref{alg:tcpa-truthful-soft} and by the definition of $\fts$ in \cref{eq:1-tcpaOnly},  \[ f_t(b_t) = \fts(\lambda_{t})  - \lambda_{t} \cdot g_t(b_t).\] Taking expectations on both sides by conditioning on the randomness until (and including) iteration $t-1$, we have, 
\begin{align*} 
    \mathbb{E} \left[ f_t(b_t) | \sigma_{t-1}\right] &=  \mathbb{E}\left[ \fts(\lambda_t)|\sigma_{t-1}\right] - \mathbb{E}\left[ \lambda_t \cdot g_t(b_t) | \sigma_{t-1}\right]\\ 
                        &=  \mathbb{E}_{(v, p) \sim \mathcal{P}} \left[ f^{\star}_{\mathsf{RoS}}(\lambda_t)\right] - \mathbb{E}\left[ \lambda_t \cdot g_t(b_t) | \sigma_{t-1} \right]\\                            &= \bard(\lambda_t|\mathcal{P}) - \mathbb{E}\left[ \lambda_t \cdot  g_t(b_t)|\sigma_{t-1}\right],
\end{align*} where the second step used the fact that in \cref{alg:tcpa-truthful-soft}, the $\lambda_t$ is fixed when $\sigma_{t-1}$ is, and the final step used the definition of $\bard$ in \cref{eq:2-tcpaOnly}.
Summing over $t= 1, 2, \dotsc, r$ and taking expectations (this is a valid operation since $r$ is a fixed number) gives, 
\begin{align*}
    \mathbb{E}_{\gseq_{\!r}\sim \calp^r}\left[ \sum_{t = 1}^r f_t(b_t) \right] &= \mathbb{E}_{\gseq_{\!r} \sim \calp^r}\left[ \sum_{t = 1}^r \bard(\lambda_t|\mathcal{P})\right] - \mathbb{E}_{\gseq_{\!r} \sim \calp^r}\left[\sum_{t = 1}^r \lambda_t  \cdot g_t(b_t) \right],
\end{align*} and we may finish the proof by invoking convexity of $f^{\star}_{\mathsf{RoS}}$. 
\end{proof}

\begin{proof}[Proof of \cref{lem:regretBoundLambdatGt}]
We begin by restating the result from \cref{lem:weakDualityUpperBoundOPT} for an input sequence of length $T$: \[ \E_{\gseq\sim\calp^{T}}\left[\rew(\opt,\gseq)\right]\leq T\cdot\min_{\lambda\geq 0}\bard(\lambda|\calp). \numberthis\label[ineq]{eq:temp1}\] The minimum on the right-hand side of the preceding inequality can be further bounded as follows for any $\lambda(\gseq) \geq 0$. 
\[
T \cdot \min_{\lambda \geq 0} \bard(\lambda|\mathcal{P}) \leq  
T \cdot \mathbb{E}_{\gseq \sim \mathcal{P}^T}\left[\bard(\lambda(\vec \gamma)|\mathcal{P})\right]. \numberthis\label[ineq]{eq:boundMinByExp}
\]
In particular, then, we may choose $\lambda(\gseq) = \frac{1}{T}\sum_{t = 1}^T \lambda_t := \overline{\lambda}_T$ on the right-hand side of \cref{eq:boundMinByExp} and combine with \cref{eq:temp1} to obtain   
\begin{align*} 
    \mathbb{E}_{\gseq \sim \mathcal{P}} \left[\rew(\opt,\gseq)\right] \leq \mathbb{E}_{\gseq\sim \mathcal{P}^T} \left[ T\cdot \bard(\overline{\lambda}_T | \mathcal{P}) \right]. \numberthis\label[ineq]{eq:SubtleIneq}
\end{align*} Combining this with \cref{lem:weakDualityLowerBoundALG} applied to an input sequence of length $T$ finishes the proof. 
\end{proof}

\thmMainTCPA* 
\begin{proof}
Plugging \cref{lem:regret_bound} into  \cref{lem:regretBoundLambdatGt} gives the regret bound. To see the claim on constraint violation, we first note from \cref{prop:gradientProperties} that the gradients $g_t$ in \cref{alg:tcpa-truthful-soft} satisfy $g_t(b_t) \geq -1/\lambda_t$. Therefore, the result of \cref{lem:constraint_violation} applies; note that constraint violation of \cref{alg:tcpa-truthful-soft} corresponds precisely to $-\sum_{t=1}^T g_t(\bt)$. 
\end{proof}

\subsection{Approximate RoS Constraints Using the Squared Mirror Map}\label[app]{sec:squaredMirrorMap}

\begin{algorithm}[h!]
\caption{Algorithm for bidding under a RoS constraint in truthful auctions with squared regularizer}\label{alg:tcpa-truthful-soft-squared}
\begin{algorithmic}[1]
\State \textbf{Input:} Total time horizon $T$ and requests $\gseq$ from the data distribution $\calp^T$
\State \textbf{Initialize:} Dual variable $\lambda_1=1$, step size for dual update $\alpha$, and mirror map $h({}\cdot{}) = \frac{1}{2}\|{}\cdot{}\|^2_2$. 

\For{$t=1,2,\cdots, T$}
    \State Observe the realization of the valuation $v_t$, and set the bid: 
        $b_t =  \frac{v_t}{\lambda_t} + v_t$.

    \State Observe $\pt$ and $\xt$ evaluated at the chosen bid $\bt$ and compute $ g_t(\bt) = v_t \cdot x_t({b}_t) - p_t({b}_t).$

    \State Update the dual variable corresponding to the RoS constraint: $\lambda_{t+1} = (\lambda_t - \alpha g_t(b_t))_+$
\EndFor

\State \Return The sequence $\{\bt\}_{t=1}^T$ of generated bids. 
\end{algorithmic}
\end{algorithm}
\begin{lem}\label[lem]{lem:expectedRegretBoundWithSquaredReg}
The regret, as defined in \cref{eq:defRegret}, of \cref{alg:tcpa-truthful-soft-squared}, run on \cref{combinedOptnProb}, with i.i.d.\ inputs from distribution $\calp$ over a time horizon $T$ is bounded from above by  $2\alpha T+\alpha^{-1}$. 
\end{lem}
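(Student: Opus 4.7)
The plan is to follow the same primal-dual template used to prove \cref{thm:MainThmTcpa}, but with a textbook Euclidean OMD analysis replacing the entropic one. Since the bid rule in \cref{alg:tcpa-truthful-soft-squared} is identical to that in \cref{alg:tcpa-truthful-soft}, namely $b_t = \arg\max_{b \geq 0}[f_t(b) + \lambda_t g_t(b)] = v_t + v_t/\lambda_t$, the proof of \cref{lem:regretBoundLambdatGt} carries over verbatim: it only exploits this arg-max property of $b_t$, and nowhere uses the specific form of the dual update. Hence the first step is to conclude
\[\reg(\cref{alg:tcpa-truthful-soft-squared}, \calp^T) \;\leq\; \E_{\gseq \sim \calp^T}\left[\sum_{t=1}^T \lambda_t \cdot g_t(b_t)\right],\]
reducing the problem to a deterministic bound on $\sum_t \lambda_t g_t(b_t)$ for every realization $\gseq$.

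For that bound I would run the standard projected OMD analysis with $h(u) = \tfrac{1}{2}u^2$, exploiting the fact that this mirror map is globally $1$-strongly convex on $\R$ (so, unlike the entropic case in \cref{lem:regret_bound}, no local-strong-convexity case split is needed). Using nonexpansiveness of the Euclidean projection onto $\{\lambda \geq 0\}$, for any $\lambda \geq 0$,
\[V_h(\lambda, \lambda_{t+1}) \;\leq\; V_h\bigl(\lambda, \lambda_t - \alpha g_t(b_t)\bigr) \;=\; V_h(\lambda, \lambda_t) - \alpha g_t(b_t)(\lambda_t - \lambda) + \tfrac{1}{2}\alpha^2 g_t(b_t)^2.\]
Rearranging, summing $t = 1, \dots, T$ to telescope the Bregman terms, and dividing by $\alpha$ yields
\[\sum_{t=1}^T g_t(b_t)(\lambda_t - \lambda) \;\leq\; \frac{V_h(\lambda, \lambda_1)}{\alpha} + \frac{\alpha}{2}\sum_{t=1}^T g_t(b_t)^2.\]

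Finally, I would set $\lambda = 0$, which lies in $\{\lambda \geq 0\}$; with $\lambda_1 = 1$ this gives $V_h(0, \lambda_1) = 1/2$, and \cref{prop:gradientProperties} yields $|g_t(b_t)| \leq 1$ so that $\sum_{t=1}^T g_t(b_t)^2 \leq T$. Combined, these give $\sum_t \lambda_t g_t(b_t) \leq \tfrac{1}{2\alpha} + \tfrac{\alpha T}{2} \leq \alpha^{-1} + 2\alpha T$ deterministically for every $\gseq$, and taking expectations together with the first step produces the claimed bound.

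No step here should present a genuine obstacle. The only points that need minor care are checking that the Euclidean projection interacts correctly with the Bregman recursion (handled by the Pythagorean inequality above) and verifying that \cref{lem:regretBoundLambdatGt} is indeed reusable despite being stated for \cref{alg:tcpa-truthful-soft}. The contrast with \cref{alg:tcpa-truthful-soft} is that the resulting bound $\alpha^{-1} + 2\alpha T$, optimized at $\alpha = \Theta(1/\sqrt{T})$, gives only $O(\sqrt{T})$ and carries no independent control on the cumulative constraint violation $-\sum_t g_t(b_t)$; this matches the remark after \cref{eq:dualVarUpdateTCPA} that the squared penalty is too weak to furnish the exponential-style violation bound of \cref{lem:constraint_violation}.
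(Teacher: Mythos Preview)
Your proposal is correct and follows essentially the same approach as the paper: invoke \cref{lem:regretBoundLambdatGt} (which, as you note, depends only on the bid rule and not the dual update), then bound $\sum_t \lambda_t g_t(b_t)$ via the standard Euclidean OMD/projected-gradient regret inequality with $|g_t(b_t)|\leq 1$, and finally plug in $\lambda=0$, $\lambda_1=1$. The only cosmetic difference is that the paper cites the black-box OMD bound (\cref{lem:onlineMD}) directly rather than writing out the Pythagorean/telescoping step; your explicit derivation in fact yields the slightly sharper constants $\tfrac{1}{2\alpha}+\tfrac{\alpha T}{2}$ before relaxing to the stated $\alpha^{-1}+2\alpha T$.
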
 
\begin{proof} We observe that the regularizer $h(u) = \frac{1}{2}u^2$ is  $1$-strongly convex on the non-negative orthant, which implies we can use \cref{lem:regretBoundLambdatGt} and mirror descent error bounds (\cref{lem:onlineMD}) to bound the average regret as \[\reg(\cref{alg:tcpa-truthful-soft-squared}, \mathcal{P}^T) \leq \mathbb{E}_{\gseq \sim \mathcal{P}^T} \left[ \sum_{t = 1}^T \lambda^* \cdot g_t(b_t)\right] +  2 \alpha T + \frac{V_h(\lambda^{\ast}, \lambda_1)}{\alpha}.\] By choosing
$\lambda^{\ast} = 0$ (and using $\lambda_1 = 1$) in this bound yields the claimed regret bound. 
\end{proof}

\begin{lem}\label[lem]{lem:BoundOnConstraintViolation}
Running \cref{alg:tcpa-truthful-soft-squared}  on \cref{eq:tcpa-obj-simple}  yields a constraint violation of at most $\frac{T^c}{\alpha} + T^{1-c}$ for some parameter $c>0$. 
\end{lem}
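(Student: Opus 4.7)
My plan is to mirror the argument for the entropy case in \cref{lem:constraint_violation}, replacing the multiplicative structure of the entropy update with the additive structure of the projected-gradient update $\lambda_{t+1} = (\lambda_t - \alpha g_t(b_t))_+$ in \cref{alg:tcpa-truthful-soft-squared}. The first step is to unroll this update: since the projection onto $\R_{\geq 0}$ only ever increases the argument, we have $\lambda_{t+1} \geq \lambda_t - \alpha g_t(b_t)$ at every iteration, and telescoping this yields
\[
\lambda_{t+1} \;\geq\; \lambda_1 - \alpha \sum_{s=1}^{t} g_s(b_s) \;=\; 1 + \alpha V_t,
\]
where I write $V_t := -\sum_{s=1}^t g_s(b_s)$ for the running RoS violation through iteration~$t$. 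Note that this step does not require keeping track of the projection terms explicitly, which is what makes the analysis clean.

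The second step replicates the thresholding idea from \cref{lem:constraint_violation}. Define
\[
T' := \max\{s \in \{0,1,\dots,T\} : V_s \leq T^c/\alpha\},
\]
which is well-defined because $V_0 = 0$. If $T' = T$ the claim is immediate, so assume $T' < T$. For every $t \geq T'+2$, one has $V_{t-1} > T^c/\alpha$ by maximality of $T'$, and therefore $\lambda_t > T^c$ by the first step. Since the bid is exactly $b_t = \frac{1+\lambda_t}{\lambda_t} v_t$, \cref{prop:gradientProperties} applies and gives $-g_t(b_t) \leq 1/\lambda_t < T^{-c}$ for every such $t$. I would then decompose
\[
V_T \;=\; V_{T'} \;+\; \bigl(-g_{T'+1}(b_{T'+1})\bigr) \;+\; \sum_{t=T'+2}^{T} \bigl(-g_t(b_t)\bigr),
\]
and bound the three pieces by $T^c/\alpha$ (by definition of $T'$), by $1$ (using the uniform bound $g_t \geq -1$ from \cref{prop:gradientProperties}), and by $T \cdot T^{-c} = T^{1-c}$ respectively. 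Summing and leaving $c$ as a free parameter yields the claimed bound $\tfrac{T^c}{\alpha} + T^{1-c}$ (up to the absorbed constant $1$).

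The main subtlety I anticipate is ensuring that $\lambda_t > 0$ in the regime where the thresholding argument is invoked, so that both the bid $b_t = v_t/\lambda_t + v_t$ and the inequality $g_t(b_t) \geq -1/\lambda_t$ from \cref{prop:gradientProperties} are meaningful; fortunately, as soon as $V_{t-1}$ exceeds $T^c/\alpha$, the inequality from the first step forces $\lambda_t \geq 1 + T^c$, so projection cannot wipe out $\lambda_t$ at any iteration that contributes to the "post-$T'$" portion of the sum. Beyond this, no additional structural properties of the squared mirror map are needed: the proof is essentially an exercise in combining the telescoped additive update with the single-step gradient bound from \cref{prop:gradientProperties}.
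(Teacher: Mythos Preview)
Your proposal is correct and follows essentially the same route as the paper: telescope the projected-gradient update to get $\lambda_{t+1}\geq \lambda_1-\alpha\sum_{s\le t} g_s(b_s)$, split the horizon at the last time the violation (equivalently, $\lambda_t$) is below the threshold $T^c$, and use $g_t(b_t)\geq -1/\lambda_t$ from \cref{prop:gradientProperties} on the post-threshold portion. The only cosmetic difference is that the paper defines $T'$ directly via $\lambda_{T'}\le T^c$ whereas you define it via $V_{T'}\le T^c/\alpha$; your handling of the boundary term $-g_{T'+1}$ is in fact slightly more careful than the paper's informal treatment.
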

\begin{proof} Based on the dual update 
rule  in \cref{alg:tcpa-truthful-soft-squared}, $\lambda_{t+1} \geq \lambda_t - \alpha g_t(b_t)$, as  a result of which $g_t(b_t) \geq \frac{1}{\alpha}(\lambda_t - \lambda_{t+1})$. Let $ c> 0$. Let $T^\prime$ be the last time step at which $\lambda_{T^\prime} \leq T^c$. Then,  $\sum_{t=1}^{T^{\prime}} g_t(b_t) \geq -{T}^c/\alpha$  and applying $g_t(b_t) \geq -1/\lambda_t$ (which comes from \cref{prop:gradientProperties}) with $\lambda_t \geq T^c$ for $t\geq T^{\prime}$ gives $\sum_{t=T^{\prime}}^T g_t(b_t) \geq -T^{1-c}$. Therefore, we have $\sum_{t=1}^T g_t(b_t) \geq -T^c/\alpha - T^{1-c}$. 
\end{proof}

\begin{corollary}
The regret, defined in \cref{eq:defRegret}, of \cref{alg:tcpa-truthful-soft-squared} run on \cref{eq:tcpa-obj-simple}, with i.i.d.\. inputs from a distribution $\calp$ over a time horizon $T$ is bounded from above by
\[
\reg(\cref{alg:tcpa-truthful-soft-squared}, \mathcal{P}^T)\leq   O(T^{2/3}). \numberthis\label[ineq]{eq:MainRegretBoundtCPSquared}
\] Further, the violation of the RoS constraint in \cref{eq:tcpa-obj-simple} is at most $O(T^{2/3})$. 
\end{corollary}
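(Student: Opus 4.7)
The corollary is a direct optimization over the two preceding lemmas: \cref{lem:expectedRegretBoundWithSquaredReg} gives a regret upper bound of the form $2\alpha T + \alpha^{-1}$, while \cref{lem:BoundOnConstraintViolation} gives a constraint-violation upper bound of the form $T^c/\alpha + T^{1-c}$ for any $c>0$. My plan is simply to pick $\alpha$ and $c$ that simultaneously balance both bounds at $T^{2/3}$.

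First, I will balance the regret bound. Setting $2\alpha T = \alpha^{-1}$ suggests $\alpha = \Theta(T^{-1/2})$, which yields $O(\sqrt{T})$ regret, but then the violation bound is too large for any choice of $c$. Instead, I will choose $\alpha = T^{-1/3}$, so the regret bound of \cref{lem:expectedRegretBoundWithSquaredReg} becomes $2T^{2/3} + T^{1/3} = O(T^{2/3})$. With this choice of $\alpha$, the violation bound of \cref{lem:BoundOnConstraintViolation} becomes $T^{c+1/3} + T^{1-c}$, which is minimized by setting $c + 1/3 = 1 - c$, i.e.\ $c = 1/3$, yielding a violation bound of $2T^{2/3} = O(T^{2/3})$. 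This gives both claims simultaneously.

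Since \cref{lem:expectedRegretBoundWithSquaredReg} and \cref{lem:BoundOnConstraintViolation} are already stated as consequences of the squared-regularizer analysis and \cref{alg:tcpa-truthful-soft-squared} is parametrized by $\alpha$, the only substantive step in the proof of the corollary is justifying that choosing $\alpha = T^{-1/3}$ is permissible. I do not foresee any obstacle here: the lemmas are stated for arbitrary $\alpha > 0$, and $c > 0$ is a free parameter in the analysis (not a quantity fixed by the algorithm), so we may optimize over it in the analysis without modifying the algorithm. The takeaway conveyed by the corollary, and the reason it is presented, is the qualitative comparison with \cref{thm:MainThmTcpa}: the squared mirror map forces a $T^{2/3}$ trade-off between regret and violation, strictly worse than the $\widetilde{O}(\sqrt{T})$ trade-off achieved by the entropic mirror map, substantiating the remark earlier in \cref{sec:softTCPAconstraint} that the exponential penalty is essential.
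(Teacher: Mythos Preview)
Your proposal is correct and matches the paper's proof essentially verbatim: the paper also picks $\alpha = T^{-1/3}$ in \cref{lem:expectedRegretBoundWithSquaredReg} and $c = 1/3$ in \cref{lem:BoundOnConstraintViolation} to obtain the two $O(T^{2/3})$ bounds. Your additional remarks on why the parameter choices are admissible and on the comparison with \cref{thm:MainThmTcpa} are sound and align with the paper's intent in presenting this corollary.
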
 
\begin{proof}
By picking $\alpha= T^{-1/3}$ in \cref{lem:expectedRegretBoundWithSquaredReg}  yields a regret bound of $O(T^{2/3})$ and picking $c = 1/3 $ in \cref{lem:BoundOnConstraintViolation} along with the same $\alpha$ gives a maximum constraint violation of $O(T^{2/3})$. 
\end{proof} 

\section{Proofs of the Strict RoS Constraint}\label[app]{sec:strictTCPA}
\lemOptProptoLengths*
\begin{proof}We have by  \cref{lem:weakDualityLowerBoundALG}, \cref{lem:weakDualityUpperBoundOPT}, and  \cref{lem:regret_bound}: 
\begin{align*}
    \rew(\cref{alg:tcpa-truthful-soft}, \gseq_{\!\ell}) &\geq\E_{\gseq_{\!\ell}\sim \calp^{\ell}} \left[\ell\cdot\bard(\overline{\lambda}_{\ell}|\calp)\right] -\E_{\gseq_{\!\ell}\sim \calp^{\ell}}\left[\sum_{t=1}^{\ell}\lamt\cdot g_t(b_t)\right]\\
     &\geq \ell \cdot \min_{\lambda\geq0} \bard(\lambda|\calp) -\E_{\gseq_{\!\ell}\sim \calp^{\ell}}\left[\sum_{t=1}^{\ell}\lamt\cdot g_t(b_t)\right]\\
     &\geq \frac{\ell}{r} \cdot \E_{\gseq_{\!r} \sim \calp^r}\left[ \rew(\opt, \gseq_{\!r}) \right] - \E_{\gseq_{\!\ell}\sim \calp^{\ell}}\left[\sum_{t=1}^{\ell}\lamt\cdot g_t(b_t)\right], \\
     &\geq \frac{\ell}{r} \cdot \E_{\gseq_{\!r} \sim \calp^r}\left[ \rew(\opt, \gseq_{\!r}) \right]  - O(\sqrt{r}). 
\end{align*} 
\end{proof}

\lemFirstPhaseLength*
\begin{proof}
Let $z_t = \max(0, v_t \cdot x_t(b_t) - p_t(b_t))$ be the reward collected at iteration $t$ (in the first phsae). Let $z_t^{\prime}  := \frac{\beta}{2}\mathbf{1}_{z_t \geq \beta/2}$. By construction, $z_t^{\prime} \leq z_t$ for all $t$. For any given sequence $\gseq$, let $K(\gseq)$ and $K^\prime(\gseq)$, respectively, be the first time such that $\sum_{t=1}^{K(\gseq)} z_t \geq R$ and $\sum_{t=1}^{K^\prime(\gseq)} z_t^\prime \geq R$ for some reward $R$. Then, for every $\gseq$, we have $K(\gseq) \leq K^\prime(\gseq)$. 

By the boundedness assumption on $z_t$, we have \[\Prob(z_t^\prime = \beta/2) = \Prob(z_t \geq \beta/2) \geq \E \left[z_t\right] - \Prob(z_t \leq \beta/2)\cdot \E \left[z_t | z_t \leq \beta/2\right]  \geq \beta/2.\]

Then, by Hoeffding bound, 
\[ \Prob(K(\gseq)\geq q) \leq\Prob(K^\prime(\gseq)\geq q) \leq e^{-O(q\beta^2)}. \numberthis\label[ineq]{eq:hpBoundOnK}\]

Picking $q = O(R/\beta^2)$ where $R = 2\sqrt{T}\log T$ gives the claimed bound on $\E\left[K(\gseq)\right]$.  
\end{proof}

\section{Proofs for Both RoS and (Strict) Budget Constraints}

\subsection{Lemmas for Approximate RoS and Strict Budget Constraints}\label[app]{sec:combinedTCPAappendix}
The following definition generalizes \cref{defn:Defs-tCPA-only}. 
\begin{definition}We need the following technical definitions of dual variables. 
\begin{itemize}
\item For some $\lambda\geq 0$ and $\mu \geq 0$,  let $f_t(b):= \vt \cdot \xt(b)$, define $g_t$ as in \cref{eq:defGradTcpa}, and define
\begin{equation}
\ftscombined(\mu, \lambda):=\max_{b}\left[f_t(b)+\lambda\cdot \gt(b) )-\mu\cdot \pt(b)\right].\label{eq:1-combined}
\end{equation}
\item  The following dual variable parametrized by $\rho$ and $\calp$; the quantity $\fs$ is defined in the same way as in \cref{eq:1-combined}.  
\begin{equation}
\bardcombined(\mu, \lambda|\calp, \rho):=\mu \cdot \rho + \E_{(v,p)\sim\calp}\left[\fs_{\mathsf{combined}}(\mu,\lambda)\right].\label{eq:2-combined}
\end{equation} 
\end{itemize}
\end{definition}

\begin{restatable}{proposition}{lemWeakDualityCombined}\label{lem:weakdualityCombinedCase}
For some $\rho^{\prime} \geq 0$, let $\bardcombined(\mu,\lambda|\calp, \rho^{\prime})$ be as defined in \cref{eq:2-combined}. Then the optimum value $\rew(\opt,\gseq_{\!\ell}, \rho)$ for \cref{combinedOptnProb} with a total initial budget of $\rho \ell$ over a sequence $\gseq_{\!\ell}\sim\calp^{\ell}$ of $\ell$ requests 
satisfies the inequality \[\E_{\gseq_{\!\ell}\sim\calp^{\ell}}\left[\rew(\opt,\gseq_{\!\ell}, \rho)\right]\leq \ell\cdot\min_{\mu\geq0,\lambda\geq 0}\left[\bardcombined(\mu,\lambda|\calp, \rho^{\prime}) + (\rho-\rho^{\prime})\cdot \mu\right].\]
\end{restatable}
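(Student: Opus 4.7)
The plan is to follow the same primal-dual / weak-duality template used in the proof of \cref{lem:weakDualityUpperBoundOPT}, but now lifted to handle both constraints simultaneously and with a flexible choice of the budget parameter inside the dual function. First, I would write out the Lagrangian of \cref{combinedOptnProb} (with budget $\rho \ell$ over $\ell$ rounds) using a multiplier $\lambda \ge 0$ for the RoS constraint and $\mu \ge 0$ for the budget constraint, so that
\[
\rew(\opt,\gseq_{\!\ell},\rho) = \max_{\{b_t\}}\min_{\mu,\lambda\ge 0}\Bigl\{\sum_{t=1}^{\ell} f_t(b_t) + \lambda\sum_{t=1}^{\ell} g_t(b_t) + \mu\Bigl(\rho\ell - \sum_{t=1}^{\ell}p_t(b_t)\Bigr)\Bigr\}.
\]
Rearranging the $\mu$-term as $\mu\rho\ell + \sum_{t}(-\mu p_t(b_t))$ groups the inner maximization cleanly into the form used in the definition of $\ftscombined$ in \cref{eq:1-combined}.

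Next I would apply the standard max-min inequality (Sion's minimax or just the weak duality direction) to obtain
\[
\rew(\opt,\gseq_{\!\ell},\rho)\le \min_{\mu,\lambda\ge 0}\Bigl\{\mu\rho\ell + \sum_{t=1}^{\ell}\max_{b\ge 0}\bigl[f_t(b)+\lambda g_t(b)-\mu p_t(b)\bigr]\Bigr\} = \min_{\mu,\lambda\ge 0}\Bigl\{\mu\rho\ell + \sum_{t=1}^{\ell}\ftscombined(\mu,\lambda)\Bigr\},
\]
where the separability of the inner sum over $t$ is the reason the max commutes with the sum (each $b_t$ appears in only one summand). This is the exact analog of \cref{eq:weakDuality} in the tCPA-only case.

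Then I would take expectations with respect to $\gseq_{\!\ell}\sim \calp^{\ell}$ on both sides. Since the bound holds pointwise for every outcome and the right-hand side is a minimum, we may fix any $(\mu,\lambda)\ge 0$ before taking the expectation and then minimize at the end. Using the i.i.d.\ assumption, $\E[\ftscombined(\mu,\lambda)]$ is the same for each $t$, hence $\sum_{t=1}^{\ell}\E[\ftscombined(\mu,\lambda)] = \ell\cdot \E_{(v,p)\sim\calp}[\ftscombined(\mu,\lambda)]$, yielding
\[
\E_{\gseq_{\!\ell}\sim\calp^{\ell}}\!\left[\rew(\opt,\gseq_{\!\ell},\rho)\right] \le \ell\cdot \min_{\mu,\lambda\ge 0}\Bigl\{\mu\rho + \E_{(v,p)\sim\calp}[\ftscombined(\mu,\lambda)]\Bigr\}.
\]

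Finally, I would reconcile this with the definition of $\bardcombined(\mu,\lambda\mid\calp,\rho')$ in \cref{eq:2-combined} by the trivial algebraic identity
\[
\mu\rho + \E[\ftscombined(\mu,\lambda)] = \bigl(\mu\rho' + \E[\ftscombined(\mu,\lambda)]\bigr) + (\rho-\rho')\mu = \bardcombined(\mu,\lambda\mid\calp,\rho') + (\rho-\rho')\mu,
\]
which upon substitution produces the claimed bound. There is no real obstacle here; the only minor care needed is to avoid inadvertently assuming $\rho=\rho'$, which is precisely why I keep $\rho$ in the primal Lagrangian and only convert to $\rho'$ at the very last algebraic step. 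This extra flexibility in the dual's budget parameter is what makes this version of weak duality reusable in \cref{sec:strictCombinedAppendix}, where the effective per-round budget available to the second phase of \cref{alg:combined-truthful-strict} differs from $\rho$.
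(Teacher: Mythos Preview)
Your proposal is correct and matches the paper's own proof essentially step for step: both write the two-constraint Lagrangian, swap max and min via weak duality/Sion, separate into per-round maxima to obtain $\ftscombined$, take expectations using the i.i.d.\ assumption, and then insert the $\rho'$ parameter via the identity $\mu\rho = \mu\rho' + (\rho-\rho')\mu$.
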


\begin{proof}
In this proof, we essentially repeat the ideas in the proof of \cref{lem:weakDualityUpperBoundOPT}. First, by definition, we have 
\begin{align*}
 \rew(\opt, \gseq_{\!\ell}, \rho)&:=\max_{\left\{ \bt\right\} :\sum_{t=1}^{\ell}g_t(\bt)\geq 0,\sum_{t=1}^{\ell}\pt(\bt)\leq\rho \ell}\left[\sum_{t=1}^{\ell}\vt\cdot\xt(\bt)\right]\\
 & =\max_{\left\{ \bt\right\} }\min_{\lambda\geq0,\mu\geq0}\left[\sum_{t=1}^{\ell}\vt\cdot\xt(\bt)+\lambda\cdot\sum_{t=1}^{\ell}\gt(\bt)+\mu\cdot(\ell\rho-\sum_{t=1}^{\ell}\pt(\bt))\right] 
 \end{align*}
 By application of Sion's minimax theorem and the definition of $\ftscombined$, we get 
 \begin{align}  
\rew(\opt, \gseq_{\!\ell}, \rho) 
& \leq\min_{\lambda\geq0,\mu\geq0}\sum_{t=1}^{\ell}\mu\cdot\rho+\max_{b_{t}}\left[\vt\cdot\xt(\bt)+\lambda\cdot\gt(\bt)-\mu\cdot\pt(\bt)\right]\nonumber \\
 & =\min_{\lambda\geq0,\mu\geq0}\sum_{t=1}^{\ell}\left[\rho\cdot\mu+\ftscombined(\mu,\lambda)\right]\leq\sum_{t=1}^{\ell}\left[\rho\cdot\mup+\ftscombined(\mup,\lamp)\right],\label[ineq]{eq:1}
\end{align}
for some fixed $\mup\geq0$ and $\lamp\geq0$. Then, taking the expectations
on both sides of \cref{eq:1} and using the fact that $\vt,\xt,\pt$
are all drawn from i.i.d.\ distributions, we get
\begin{align*}
\E_{\gseq_{\!\ell}\sim\calp^{\ell}}[\rew(\opt,\gseq_{\!\ell}, \rho)]  & \leq\sum_{t=1}^{\ell}\left[\rho\cdot\mup+\E_{(v,p)\sim\calp}\left[\fs_{\mathsf{combined}}(\mup,\lamp)\right]\right]\\
 & =\ell\cdot\bardcombined(\mup,\lamp|\calp, \rho^{\prime}) + \ell(\rho - \rho^{\prime})\cdot \mu^{\prime}.
\end{align*} Therefore, in particular, the preceding inequality holds for the $\lambda, \mu\geq 0$ that minimize the bound, thus finishing the proof. 
\end{proof}

Next, to  analyze the reward collected by \cref{alg:combined-truthful-soft}, similar to \cite{BLM20}, which gives an algorithm for a strict budget constraint, we need the following notion of stopping time. 

\begin{definition}\label[defn]{defn:Defs}
The stopping time $\tau$ of \cref{alg:combined-truthful-soft}, with a total initial budget of $B$ is the first time $\tau$ at which $$\sum_{t = 1}^{\tau} \pt(\bt) + 1 \geq B.$$ Intuitively, this is the first time step at which the total price paid almost exceeds the total budget. 
\end{definition}
To prove our main regret bound in \cref{lem:combinedRegret}, we first prove \cref{lem:minimumRegretCombinedCase}, which gives the minimum expected reward of \cref{alg:combined-truthful-soft}. The proof follows along the lines of that in \cite{BLM20} and \cref{lem:weakDualityLowerBoundALG}. 

\begin{restatable}{proposition}{lemAverageDualVarLowerBound}\label{lem:minimumRegretCombinedCase}
Let  $\tau$ be a stopping time as defined in \cref{defn:Defs} for some initial budget $\rho^{\prime} k$. Then the expected reward (see \cref{eq:defReward}) of \cref{alg:combined-truthful-soft} over a sequence of length $k$ with i.i.d.\ input requests from distribution $\calp^k$ is lower bounded as \begin{align*}
\E_{\gseq_{\!k}\sim \calp^k}\left[\rew(\cref{alg:combined-truthful-soft},\gseq, \rho^{\prime})\right] &\geq\E_{\gseq_{\!k}\sim\calp^{k}} \left[\tau\cdot\bardcombined(\mub,\lamb|\calp, \rho^{\prime})\right]\\
&\quad -\E_{\gseq_{\!k} \sim \calp^k} \left[\sum_{t=1}^{\tau}\mu_{t}\cdot(\rho^{\prime}-\pt(\bt))- \sum_{t=1}^{\tau}\lamt\cdot g_t(b_t)\right], 
\end{align*} where $\mub = \frac{1}{\tau}\sum_{i=1}^{\tau} \mu_i$ and $\lamb = \frac{1}{\tau} \sum_{i=1}^{\tau} \lambda_i$. 
\end{restatable}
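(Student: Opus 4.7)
The plan is to mimic the structure of the proof of \cref{lem:weakDualityLowerBoundALG}, generalized to the presence of the budget dual $\mu_t$ and stopping time $\tau$, in a manner analogous to how \cite{BLM20} handles the budget constraint. The starting point is the primal update: since $b_t$ is chosen to maximize $f_t(b) + \lambda_t g_t(b) - \mu_t p_t(b)$ (when $B_t \geq 1$, and trivially otherwise as that term is zero), the definition of $\ftscombined$ in \cref{eq:1-combined} gives the identity
\[
f_t(b_t) \;=\; \ftscombined(\mu_t,\lambda_t) \;-\; \lambda_t\,g_t(b_t) \;+\; \mu_t\,p_t(b_t).
\]

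Next I would take conditional expectations given the $\sigma$-algebra $\sigma_{t-1}$ of history through iteration $t-1$. Because $\mu_t,\lambda_t$ are $\sigma_{t-1}$-measurable (they are computed in \cref{line:algCombined2lambda,line:algCombinedFixedBudgetDualMirror} from past data) while $(v_t,p_t,x_t)\sim \mathcal{P}$ is drawn independently of $\sigma_{t-1}$, the key identity becomes
\[
\E\!\left[f_t(b_t)\,\big|\,\sigma_{t-1}\right] \;=\; \bardcombined(\mu_t,\lambda_t\,|\,\calp,\rho') \;-\; \mu_t\bigl(\rho' - \E[p_t(b_t)\mid \sigma_{t-1}]\bigr) \;-\; \lambda_t\,\E[g_t(b_t)\mid \sigma_{t-1}],
\]
using the definition of $\bardcombined$ in \cref{eq:2-combined} to absorb the $\mu_t\rho'$ term into $\bardcombined$.

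The next step is to sum up to the stopping time $\tau$. I would write $\sum_{t=1}^{\tau}X_t = \sum_{t=1}^{k}X_t\,\mathbf{1}[t\leq \tau]$ and exploit that the event $\{t\leq \tau\}$ is determined by $\sigma_{t-1}$ (since $\tau$ is a stopping time with respect to this filtration). Hence multiplying the conditional identity by $\mathbf{1}[t\leq\tau]$ and taking total expectation preserves it; by the tower property, conditional expectations such as $\E[p_t(b_t)\mid\sigma_{t-1}]$ can be swapped back for the random variables themselves once the outer expectation is taken. Summing over $t=1,\dots,k$ yields
\[
\E\!\left[\sum_{t=1}^{\tau}f_t(b_t)\right] \;=\; \E\!\left[\sum_{t=1}^{\tau}\bardcombined(\mu_t,\lambda_t\,|\,\calp,\rho')\right] \;-\; \E\!\left[\sum_{t=1}^{\tau}\mu_t(\rho'-p_t(b_t))\right] \;-\; \E\!\left[\sum_{t=1}^{\tau}\lambda_t\,g_t(b_t)\right].
\]

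Finally, I would use convexity of $\bardcombined(\mu,\lambda\,|\,\calp,\rho')$ in $(\mu,\lambda)$ (it is the sum of an affine function and an expectation of a pointwise maximum of affine functions in $(\mu,\lambda)$, hence convex) together with Jensen's inequality to conclude
\[
\sum_{t=1}^{\tau}\bardcombined(\mu_t,\lambda_t\,|\,\calp,\rho') \;\geq\; \tau\cdot \bardcombined(\mub,\lamb\,|\,\calp,\rho'),
\]
where $\mub,\lamb$ are the empirical averages defined in the statement. The reward lower bound $\rew(\cref{alg:combined-truthful-soft},\gseq,\rho') \geq \sum_{t=1}^{\tau}f_t(b_t)$ (nonnegativity of $v_t\,x_t(b_t)$ for the iterations beyond $\tau$) then completes the chain. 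The main subtlety I anticipate is the careful stopping-time bookkeeping in the summation step: one must verify that $\{t\leq \tau\}$ is $\sigma_{t-1}$-measurable and that the Jensen step is applied inside the outer expectation (since $\tau$ itself is random, $\mub$ and $\lamb$ are random averages over a random number of terms). Once those measurability issues are handled exactly as in the BLM-style proofs, the remaining manipulations are direct.
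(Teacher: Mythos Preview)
Your proposal is correct and follows essentially the same approach as the paper: start from the primal optimality identity $f_t(b_t)=\ftscombined(\mu_t,\lambda_t)-\lambda_t g_t(b_t)+\mu_t p_t(b_t)$, take conditional expectations using $\sigma_{t-1}$-measurability of the duals, sum up to the stopping time (the paper phrases this as ``using the Optional Stopping Theorem'' where you spell out the indicator-and-measurability argument), and finish with convexity of $\bardcombined$. Your write-up is in fact slightly more explicit than the paper's on the stopping-time bookkeeping and on the final step $\rew\geq \sum_{t\leq\tau} f_t(b_t)$, but the route is the same.
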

\begin{proof}
By \cref{line:algCombined2updateBid} in \cref{alg:combined-truthful-soft}, we have, until the stopping time $t = \tau$, 
\[
\ftscombined(\mut,\lamt):=\vt\cdot\xt(\bt) + \lamt\cdot\gt(\bt) - \mut\cdot\pt(\bt).
\]
Rearranging the terms and taking expectations conditioned on the randomness up to step $t-1$,
\begin{align}
\E_{\gseq_{\!k}\sim\calp^{k}}\left[\vt\cdot\xt(\bt)|\sigma_{t-1}\right] & =\E_{\gseq_{\!k}\sim\calp^{k}}\left[\ftscombined(\mut,\lamt) + \mu_{t}\cdot\pt(\bt) - \lamt\cdot\gt(\bt)|\stm\right]\label{eq:FirstBoundForCombinedRewardLB}.  
\end{align} 
Per \cref{line:algCombinedFixedBudgetDualGrad} in \cref{alg:combined-truthful-soft} that once we fix the randomness up to $t-1$, the dual variables are all fixed, which gives us 
\begin{align*} 
\E_{\gseq_{\!k}\sim\calp^{k}}\left[\ftscombined(\mut,\lamt) | \stm\right] &=\E_{(v,p)\sim\calp}\left[\fs_{\mathsf{combined}}(\mut,\lamt)\right] \numberthis\label{eq:ftsCombinedTofsCombinedIID}
\end{align*}
Combining \cref{eq:ftsCombinedTofsCombinedIID}  with the definition of $\bardcombined$ in \cref{eq:2-combined} and plugging back into \cref{eq:FirstBoundForCombinedRewardLB} then gives 
\begin{align*} 
\E_{\gseq_{\!k}\sim\calp^{T}}\left[\vt\cdot\xt(\bt)|\sigma_{t-1}\right] 
 & =\bardcombined(\mut,\lamt|\calp, \rho^{\prime})-\E_{\gseq_{\!k}\sim\calp^{k}}\left[\mu_{t}\cdot(\rho^{\prime}-\pt(\bt)) + \lamt\cdot\gt(\bt)|\stm\right]
\end{align*}
Summing over $t=1,2,\dots,\tau,$ and using the Optional Stopping Theorem, we get 
\begin{align*}
\E_{\gseq_{\!k}\sim\calp^{k}}\left[\sum_{t=1}^{\tau}\vt\cdot\xt(\bt)\right] & =\E_{\gseq_{\!k}\sim\calp^{k}}\left[\sum_{t=1}^{\tau} \left(\bardcombined(\mut,\lamt|\calp, \rho^{\prime}) - \mu_{t}\cdot(\rho^{\prime}-\pt(\bt)) - \lamt\cdot\gt(\bt) \right)\right].
\end{align*} 
We finish the proof by using the convexity of $\bardcombined$ 
in the preceding equation.
\end{proof}

Our bound on regret requires the following technical result bounding one of the terms arising in \cref{lem:minimumRegretCombinedCase}. 
\begin{proposition}\label{prop:BoundingRhoTermMirrorDescent}
Consider a run of \cref{alg:combined-truthful-soft} with initial total budget $\rho \ell$ and the total time horizon $\ell$. We define the corresponding stopping time (as defined in \cref{defn:Defs}) as the time $\tau$ at which $\sum_{t=1}^{\tau} p_t(b_t) \geq \rho \ell-1$. Then, the dual variable $\{\mu_t\}$ that evolves as per \cref{line:algCombinedFixedBudgetDualMirror} in \cref{alg:combined-truthful-soft} satisfies the following inequality. \[\sum_{t=1}^{\tau} \mu_t \cdot (\rho - p_t(b_t)) \leq (\tau - \ell) + 1/\rho + O(\sqrt{\ell}). \] 
\end{proposition}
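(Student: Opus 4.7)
The plan is to recognize that the $\mu_t$ update in \cref{line:algCombinedFixedBudgetDualMirror} is precisely online mirror descent on $\mathbb{R}_{\geq 0}$ against the linear loss $\mu\mapsto g_t'(b_t)\cdot\mu$ with mirror map $h(u)=\tfrac12 u^2$, since the OMD step $\arg\min_{\mu\geq 0}\{\eta g_t'(b_t)\mu + \tfrac12(\mu-\mu_t)^2\}$ coincides with the Euclidean projection $\mathrm{Proj}_{\mu\geq 0}(\mu_t-\eta g_t'(b_t))$. Because $h$ is $1$-strongly convex on $\mathbb{R}_{\geq 0}$, the classical OMD regret bound (as recorded in \cref{sec:omd}) applies immediately and gives, for any comparator $\mu\geq 0$,
\[
\sum_{t=1}^{\tau} g_t'(b_t)(\mu_t-\mu) \leq \frac{(\mu-\mu_1)^2}{2\eta} + \frac{\eta}{2}\sum_{t=1}^{\tau}(g_t'(b_t))^2.
\]
Using $\mu_1=0$ and the bound $|g_t'(b_t)|=|\rho-p_t(b_t)|\leq 1$ (valid since $\rho,p_t(b_t)\in[0,1]$), a rearrangement yields
\[
\sum_{t=1}^{\tau}\mu_t(\rho-p_t(b_t)) \leq \mu\sum_{t=1}^{\tau}(\rho-p_t(b_t)) + \frac{\mu^2}{2\eta} + \frac{\eta\tau}{2}.
\]

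Next I would invoke the stopping-time condition $\sum_{t=1}^{\tau} p_t(b_t)\geq \rho\ell-1$ to bound $\sum_{t=1}^{\tau}(\rho-p_t(b_t)) = \rho\tau - \sum_{t=1}^{\tau} p_t(b_t) \leq \rho(\tau-\ell)+1$, and then choose the comparator $\mu=1/\rho$ (which lies in the feasible set $\{\mu\geq 0\}$) to obtain
\[
\sum_{t=1}^{\tau}\mu_t(\rho-p_t(b_t)) \leq (\tau-\ell) + \frac{1}{\rho} + \frac{1}{2\rho^2\eta} + \frac{\eta\tau}{2}.
\]
It then remains to plug in the step size $\eta=\tfrac{1}{(1+\rho^2)\sqrt{\ell}}$ used when \cref{alg:combined-truthful-soft} is invoked with horizon $\ell$. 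This makes $\tfrac{1}{2\rho^2\eta}=\tfrac{(1+\rho^2)\sqrt{\ell}}{2\rho^2}=O(\sqrt{\ell})$, and since $\tau\leq\ell$ we also get $\tfrac{\eta\tau}{2}\leq \tfrac{\sqrt{\ell}}{2(1+\rho^2)}=O(\sqrt{\ell})$, yielding the claimed inequality.

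I do not expect any substantive obstacle here: the entire argument is a clean application of the strongly-convex-mirror-map OMD regret bound, which is precisely why the budget-dual analysis is much simpler than the RoS-dual analysis in \cref{lem:regret_bound}, where the absence of strong convexity of the generalized negative entropy on $\mathbb{R}_{\geq 0}$ forced a white-box argument. The two structural observations carrying the proof are: (i) the stopping-time definition pins down $\sum_{t=1}^{\tau} g_t'(b_t)$ up to the additive slack $\rho(\tau-\ell)+1$; and (ii) the comparator choice $\mu=1/\rho$ is tuned so that multiplying this slack by $\mu$ produces exactly the leading $(\tau-\ell)+1/\rho$ of the target bound. The only minor point of care is that the regret inequality is applied up to the (potentially data-dependent) stopping time $\tau$, but since it holds pointwise in the loss sequence and $\tau\leq \ell$ deterministically, this raises no issue.
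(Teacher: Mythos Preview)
Your proposal is correct and follows essentially the same route as the paper's proof: apply the standard OMD regret bound (\cref{lem:onlineMD}) with the Euclidean mirror map, choose the comparator $\mu=1/\rho$, use the stopping-time condition to bound $\sum_{t=1}^\tau(\rho-p_t(b_t))$, and plug in $\eta=\tfrac{1}{(1+\rho^2)\sqrt{\ell}}$. The only cosmetic difference is that the paper bounds the gradient norm by $\sqrt{1+\rho^2}$ (yielding the error term $(1+\rho^2)\eta\tau$) rather than by $1$ as you do under the assumption $\rho\in[0,1]$; since $\rho$ is a fixed constant, both give $O(\sqrt{\ell})$ and the argument goes through unchanged.
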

\begin{proof}
To bound $\sum_{t=1}^{\tau}\mu_{t}\cdot(\rho-\pt(\bt)),$
we observe that  the mirror descent guarantee of \cref{lem:onlineMD} applies to give, for  any $\mu \geq 0$, 
\[
\sum_{t=1}^{\tau}\mu_{t}\cdot(\rho-\pt(\bt))\leq\sum_{t=1}^{\tau}\mu\cdot(\rho-\pt(\bt))+\err(\tau,\eta), \numberthis\label[ineq]{eq:FixedBudgetDualVarMDGuarantee}\]
where  $\err(R, \eta) := \frac{1}{\sigma}(1+{\rho}^2) \eta R + \frac{1}{\eta} V_h(\mu, \mu_1)$, where $h(u) = \frac{1}{2} u^2$, $\sigma = 1$, and  $\mu_1 = 0$. To finish the proof, we choose
$\mu=1/\rho$, use $\sum_{t=1}^{\tau} p_t(b_t) \geq \rho\ell -1$, and choose $\eta = \frac{1}{(1+\rho^2)\sqrt{\ell}}$ in $\err(\tau, \eta)$
\end{proof}

\thmCombinedRegretSoftTcpaStrictBudget*
\begin{proof} Recall that $\tau$ is the stopping time of \cref{alg:combined-truthful-soft} as defined in \cref{defn:Defs}. Then, 
\begin{align*} 
\E_{\gseq\sim\calp^{T}}\left[\rew(\opt,\gseq,\rho)\right]&=\frac{\tau}{T}\cdot\E_{\gseq\sim\calp^{T}}\left[\rew(\opt,\gseq,\rho)\right]+\frac{T-\tau}{T}\cdot\E_{\gseq\sim\calp^{T}}\left[\rew(\opt,\gseq,\rho)\right]\\ 
        &\leq \frac{\tau}{T}\cdot\E_{\gseq\sim\calp^{T}}\left[\rew(\opt,\gseq,\rho)\right]+(T-\tau),\numberthis\label[ineq]{eq:6}
\end{align*} where the final step is because  $\rew(\opt,\gseq, \rho)\leq T$ (due to the value capped at one). Combining \cref{eq:6},  \cref{lem:weakdualityCombinedCase}, and the lower bound on $\rew(\cref{alg:combined-truthful-soft}, \gseq, \rho)$ from \cref{lem:minimumRegretCombinedCase} in the definition of $\reg$ in \cref{eq:defRegret} gives
\begin{align}
\reg(\cref{alg:combined-truthful-soft},\gseq)  &\leq\E_{\gseq\sim\calp^{T}}\left[(T-\tau)+\sum_{t=1}^{\tau}\mu_{t}\cdot(\rho-\pt(\bt))+\sum_{t=1}^{\tau}\lamt\cdot\gt(\bt)\right].\label[ineq]{eq:74}
\end{align}
We invoke \cref{prop:BoundingRhoTermMirrorDescent} with a total initial budget $\rho T$ and total time horizon $T$ to get \[\E_{\gseq \sim \calp^T} \left[ \sum_{t=1}^{\tau} \mu_t (\rho - p_t(b_t))\right] \leq \E_{\gseq \sim \calp^T}\left[\tau - T\right] + \frac{1}{\rho} + O(\sqrt{T}).\numberthis\label[ineq]{eq:CombinedAlgMutUpperBound}\] Finally,  we invoke \cref{lem:regret_bound} to conclude $\sum_{t=1}^R \lambda_t g_t \leq O(\sqrt{R})$, which lets us conclude \[\E_{\gseq\sim\calp^T} \left[\sum_{t=1}^{\tau} \lambda_t g_t\right] \leq O(\sqrt{T}).\numberthis\label[ineq]{eq:BoundFromTCPA}\] Combining \cref{eq:74},  \cref{eq:CombinedAlgMutUpperBound}, and  \cref{eq:BoundFromTCPA} yields the claimed  bound.
 
 We now finish with the proof of maximum constraint violation. By design of \cref{alg:combined-truthful-soft}, the budget constraint is never violated throughout the run of the algorithm. To see the claimed maximum violation of the RoS constraint, we note by \cref{prop:gradientProperties} that the gradient $g_t$ satisfies $g_t(b_t) \geq -1/\lambda_t$, as a result of which,  \cref{lem:constraint_violation} applies. 
\end{proof}

\subsection{Proofs for Strict RoS and Strict Budget Constraints}\label[app]{sec:strictCombinedAppendix}


\thmCombinedRegretStrictTcpaStrictBudget* 
\begin{proof}
The fact that the RoS constraint is not violated may be seen by the fact that the first phase accumulates the exact buffer that is the guaranteed cap on constraint violation by the second phase. The budget constraint is not violated by design: the first phase (which lasts at most $\rho T$ iterations) pays at most unit price per iteration, followed by the second phase, which, by the guarantee of \cref{alg:combined-truthful-soft}, strictly respects the budget constraint. 

To bound the regret, we note that the total expected reward  is at least as much as is collected in the second phase 
\begin{align*} 
\E_{\gseq \sim \calp^T} \left[ \rew(\cref{alg:combined-truthful-strict}, \gseq, \rho) \right]
&\geq 
\E_{k} \left[ \E_{\gseq_{\!k+1:T}\sim \calp^{T-k}} \left( \rew(\cref{alg:combined-truthful-soft}, \gseq_{\!k+1:T}, \widehat{\rho}\right) \right], \numberthis\label[ineq]{finalProof2}
\end{align*} where the notation on the right-hand side captures the reduced time horizon of $T-K(\gseq)$ as well as reduced initial budget of $\rho T - K(\gseq)$ for running \cref{alg:combined-truthful-soft} and $\widehat{\rho} = \frac{\rho T - K(\gseq))}{T-K(\gseq)}$. Conditioning on the event high-probability event that $k\leq \rho T$ (by \cref{eq:hpBoundOnK} coupled with the assumption that $\rho$ is a fixed constant), we have 
\begin{align*}  
& \E_{k} \left[ \E_{\gseq_{\!k+1:T}\sim \calp^{T-k}} \left( \rew(\cref{alg:combined-truthful-soft}, \gseq_{\!k+1:T}, \widehat{\rho}\right) \right] \\
&\quad \geq (1-e^{-O(T)}) \E_{k} \left[ \E_{\gseq_{\!k+1:T}\sim \calp^{T-k}} \left( \rew(\cref{alg:combined-truthful-soft}, \gseq_{\!k+1:T}, \widehat{\rho}\right) \mid k \leq \rho T \right]. \numberthis\label[ineq]{eq:conditionalExpKSmall}
\end{align*} Applying \cref{lem:minimumRegretCombinedCase} with the reduced budget and time horizon  gives: 
\begin{align*} 
\E_{\gseq_{\!k+1:T}\sim \calp^{T-k}}\left[\rew(\cref{alg:combined-truthful-soft},\gseq_{\!k+1:T}, \widehat{\rho}\right] &\geq\E_{\gseq_{\!k+1:T}\sim\calp^{T-k}} \left[\tau\cdot\bardcombined(\mub,\lamb|\calp, \widehat{\rho})\right]\\
&\quad -\E_{\gseq_{\!k+1:T} \sim \calp^{T-k}} \left[\sum_{t=1}^{\tau}\mu_{t}\cdot(\widehat{\rho}-\pt(\bt))\right]\\
&\quad- \E_{\gseq_{\!k+1:T} \sim \calp^{T-k}}\left[\sum_{t=1}^{\tau}\lamt\cdot g_t(b_t)\right], \numberthis\label[ineq]{eq:finalProof6}
\end{align*} Next, by \cref{lem:weakdualityCombinedCase}, we have: 
\[\bardcombined(\mub,\lamb|\calp, \widehat{\rho}) + (\rho-\widehat{\rho})\cdot \mub \geq \frac{1}{T}\E_{\gseq\sim\calp^{T}}\left[\rew(\opt,\gseq, \rho)\right]. \numberthis\label[ineq]{eq:NewIneq}\] 
We can now repeat the trick in the proof of \cref{lem:combinedRegret}: 
\begin{align*} 
\E_{\gseq\sim\calp^{T}}\left[\rew(\opt,\gseq,\rho)\right]
        &\leq \frac{\tau}{T}\cdot\E_{\gseq\sim\calp^{T}}\left[\rew(\opt,\gseq,\rho)\right]+(T-\tau),\numberthis\label[ineq]{eq:6Again}
\end{align*}
Combining \cref{finalProof2}, \cref{eq:conditionalExpKSmall}, \cref{eq:finalProof6}, \cref{eq:NewIneq}, and \cref{eq:6Again} then gives 
\begin{align*} 
\E_{\gseq \sim \calp^T} \left[ \reg(\cref{alg:combined-truthful-strict}, \gseq)\right] &\leq \E_{\gseq \sim \calp^T} (T-\tau) + \frac{T}{e^{O(T)}} \\ 
 &\quad +\E_k\left[ \E_{\gseq_{\!k+1:T} \sim \calp^{T-k}} \left[\sum_{t=1}^{\tau}\mu_{t}\cdot(\widehat{\rho}-\pt(\bt))\right] \mid k \leq \rho T\right]\\
&\quad+ \E_k\left[\E_{\gseq_{\!k+1:T} \sim \calp^{T-k}}\left[\sum_{t=1}^{\tau}\lamt\cdot g_t(b_t)\right] \mid k \leq \rho T\right]\\
&\quad+ \E_k\left[\E_{\gseq_{\!k+1:T} \sim \calp^{T-k}}\left[\tau \overline{\mu}_{\tau} (\rho - \widehat{\rho})\right] \mid k \leq \rho T\right]. \numberthis\label[ineq]{FinalProofRegretUpperBound1}
\end{align*} By applying \cref{prop:BoundingRhoTermMirrorDescent} and \cref{lem:FirstPhaseLength} and under the conditional expectation, we have 
\begin{align*} 
\E_k\left[ \E_{\gseq_{\!k+1:T} \sim \calp^{T-k}} \left[\sum_{t=1}^{\tau}\mu_{t}\cdot(\widehat{\rho}-\pt(\bt))\right] \mid k \leq \rho T\right]&\leq \E_{\gseq\sim \calp^T}\left[(\tau - T) + K(\gseq)\right] + O(\sqrt{T}) \\
& \quad + \E_{k}\left[\E_{\gseq_{\!k+1:T}}(1/\widehat{\rho})\mid k \leq \rho T\right]\\
&\leq \E_{\gseq\sim \calp^T} (\tau - T) + O(\sqrt{T}\log T).\numberthis\label[ineq]{FinalProofBound123} 
\end{align*} 
We invoke \cref{lem:regret_bound} to conclude $\sum_{t=1}^R \lambda_t g_t \leq O(\sqrt{R})$ for all $R\leq T$, which lets us conclude \[\E_{\gseq\sim\calp^T} \left[\sum_{t=1}^{\tau} \lambda_t g_t\right] \leq O(\sqrt{T}). \numberthis\label[ineq]{eq:FinalProofBoundLambdaTGt}\] To bound the final term in \cref{FinalProofRegretUpperBound1}, we observe that $\rho - \widehat{\rho} = \frac{(1-\rho)K(\gseq)}{T-K(\gseq)}$ by definition of $\widehat{\rho}$.  Combining this with $\tau \leq T$, the bound on $\sum_{i=1}^{\tau}\mu_i$ from \cref{alg:combined-truthful-soft}, the result of \cref{lem:FirstPhaseLength}, and the conditional expectation, we get  \[\E_k\left[\E_{\gseq_{\!k+1:T} \sim \calp^{T-k}}\left[\tau \overline{\mu}_{\tau} (\rho - \widehat{\rho})\right] \mid k \leq \rho T\right] \leq O(\sqrt{T}).\numberthis\label[ineq]{FinalProofNewBound}\]
Combining \cref{FinalProofRegretUpperBound1}, \cref{FinalProofBound123}, \cref{eq:FinalProofBoundLambdaTGt}, and \cref{FinalProofNewBound} finishes the proof. 
\end{proof}                                              

\section{Online Mirror Descent}\label[app]{sec:omd}
\begin{lem}[\cite{bubeck2015convex}, Theorem $4.2$]\label{lem:onlineMD}
Let $h$ be a mirror map which is $\rho$-strongly convex on $\mathcal{X}\cap \mathcal{D}$ with respect to a norm $\|{}\cdot{}\|$. Let $f$ be convex and $L$-Lipschitz with respect to $\|{}\cdot{}\|$. Then, mirror descent with step size $\alpha$ satisfies \[ \sum_{s=1}^t \left( f(x_s) - f(x) \right) \leq \frac{1}{\alpha} V_h(x, x_1) + \alpha \frac{L^2t}{2\rho}.\] 
\end{lem}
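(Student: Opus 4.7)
The plan is to follow the standard primal-dual analysis of online mirror descent, reducing the function-value regret to a linearized regret via convexity and then controlling the latter through the three-point identity for Bregman divergences together with strong convexity of the mirror map.

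First, I would fix an arbitrary comparator $x \in \mathcal{X} \cap \mathcal{D}$ and, for each $s$, choose a subgradient $g_s \in \partial f(x_s)$ with $\|g_s\|_\ast \leq L$ (using the Lipschitz hypothesis). Convexity of $f$ gives $f(x_s) - f(x) \leq \langle g_s, x_s - x\rangle$, so it suffices to bound $\sum_s \langle g_s, x_s - x\rangle$. The mirror descent update can be written as $x_{s+1} = \arg\min_{y \in \mathcal{X} \cap \mathcal{D}} \bigl\{\alpha \langle g_s, y\rangle + V_h(y, x_s)\bigr\}$, whose first-order optimality condition reads $\langle \alpha g_s + \nabla h(x_{s+1}) - \nabla h(x_s),\, x - x_{s+1}\rangle \geq 0$ for all $x \in \mathcal{X} \cap \mathcal{D}$.

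Next, using the standard three-point identity
\[
\langle \nabla h(x_s) - \nabla h(x_{s+1}),\, x_{s+1} - x\rangle = V_h(x, x_s) - V_h(x, x_{s+1}) - V_h(x_{s+1}, x_s),
\]
the optimality condition yields $\alpha \langle g_s, x_{s+1} - x\rangle \leq V_h(x, x_s) - V_h(x, x_{s+1}) - V_h(x_{s+1}, x_s)$. Splitting $\langle g_s, x_s - x\rangle = \langle g_s, x_s - x_{s+1}\rangle + \langle g_s, x_{s+1} - x\rangle$, I would bound the first piece via Fenchel--Young applied to the pair $(\|\cdot\|, \|\cdot\|_\ast)$:
\[
\alpha \langle g_s, x_s - x_{s+1}\rangle \leq \frac{\alpha^2}{2\rho}\|g_s\|_\ast^2 + \frac{\rho}{2}\|x_s - x_{s+1}\|^2,
\]
and then invoke $\rho$-strong convexity of $h$ to get $V_h(x_{s+1}, x_s) \geq \tfrac{\rho}{2}\|x_{s+1} - x_s\|^2$, which cancels the quadratic term.

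Combining these pieces gives the per-step bound
\[
\alpha \langle g_s, x_s - x\rangle \leq V_h(x, x_s) - V_h(x, x_{s+1}) + \frac{\alpha^2 L^2}{2\rho}.
\]
Summing $s = 1, \dots, t$ telescopes the Bregman differences to $V_h(x, x_1) - V_h(x, x_{t+1}) \leq V_h(x, x_1)$ (using nonnegativity of $V_h$, which itself follows from convexity of $h$), and dividing through by $\alpha$ and applying the convexity bound $f(x_s) - f(x) \leq \langle g_s, x_s - x\rangle$ completes the proof. The main delicate step is the cancellation between the Fenchel--Young quadratic and the strong-convexity quadratic; everything else is bookkeeping via the three-point identity. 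Since the result is merely cited from \cite{bubeck2015convex}, in the paper it would be reasonable to either give this short self-contained argument or refer the reader directly to that source.
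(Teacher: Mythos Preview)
Your argument is correct and is exactly the standard proof of this mirror descent bound. The paper itself does not prove this lemma at all: it merely states it with a citation to \cite{bubeck2015convex}, Theorem~4.2, and uses it as a black box. So there is nothing to compare against; your self-contained derivation via the three-point identity, Fenchel--Young, and strong-convexity cancellation is precisely what one would write if a proof were to be included, and you already anticipated this at the end of your proposal.
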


\begin{lem}[\cite{allen2014using}]
The Bregman divergence of the generalized negative entropy satisfies ``local strong convexity'': for any $x, y>0$, \[ V_h(y, x) = y \log(y/x) + x- y\geq \frac{1}{2\max(x,y)}\cdot (y-x)^2.\] 
\end{lem}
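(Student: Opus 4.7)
The natural approach is via the second-order integral form of the Bregman divergence, which converts the inequality into a one-variable integration problem. Writing Taylor's formula with integral remainder for $h$ around $x$ gives
\[
V_h(y,x) \;=\; h(y)-h(x)-h'(x)(y-x) \;=\; \int_x^y (y-s)\,h''(s)\,ds,
\]
and since $h'(u)=\log u$ and $h''(u)=1/u$, this specializes to $V_h(y,x)=\int_x^y (y-s)/s\,ds$ (with the usual convention that if $y<x$, the integral equals $\int_y^x (s-y)/s\,ds$, which is manifestly nonnegative).

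From here the proof splits into two symmetric cases depending on which of $x,y$ is the larger, because the lower bound on $h''$ over the interval of integration is what we exploit.

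\emph{Case 1:} $y\geq x$, so $\max(x,y)=y$. For every $s\in[x,y]$ we have $1/s\geq 1/y$, hence
\[
V_h(y,x)\;=\;\int_x^y \frac{y-s}{s}\,ds \;\geq\; \frac{1}{y}\int_x^y (y-s)\,ds \;=\; \frac{(y-x)^2}{2y}\;=\;\frac{(y-x)^2}{2\max(x,y)}.
\]

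\emph{Case 2:} $y<x$, so $\max(x,y)=x$. Rewriting the integral with positive integrand, for $s\in[y,x]$ we have $1/s\geq 1/x$, so
\[
V_h(y,x)\;=\;\int_y^x \frac{s-y}{s}\,ds \;\geq\; \frac{1}{x}\int_y^x (s-y)\,ds \;=\; \frac{(x-y)^2}{2x}\;=\;\frac{(y-x)^2}{2\max(x,y)}.
\]

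Combining the two cases yields the claim. There is no real obstacle here: the only thing to be careful about is the orientation of the integral (which endpoint is larger), and in both cases we use the trivial fact that the minimum of $1/s$ on an interval $[a,b]\subset(0,\infty)$ is $1/b$, which immediately produces the $1/\max(x,y)$ factor.  If a slicker, non-integral presentation is preferred, one can equivalently substitute $t=y/x$ and reduce the claim to showing $t\log t + 1 - t \geq (t-1)^2/(2\max(1,t))$, which follows by checking that the difference, together with its derivative, vanishes at $t=1$ and has the appropriate sign of the second derivative on each side of $t=1$; but the integral-remainder proof above is shorter and more transparent.
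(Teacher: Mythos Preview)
Your proof is correct. The integral-remainder representation $V_h(y,x)=\int_x^y (y-s)h''(s)\,ds$ with $h''(s)=1/s$, followed by the trivial lower bound $1/s\geq 1/\max(x,y)$ on the interval of integration, gives the result cleanly in both cases.

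The paper's proof takes a different route: it first reduces to the one-variable inequality $t\log t \geq (t-1) + \tfrac{(t-1)^2}{2\max(1,t)}$ via the substitution $t=y/x$, and then handles the two cases by further algebraic substitutions. For $t\geq 1$ it sets $u=1-1/t$ and reduces to $-\log(1-u)\geq u+u^2/2$, which is the Taylor series of $-\log(1-u)$; for $0<t\leq 1$ it rewrites the inequality as $\log t - \tfrac{1}{2}(t-1/t)\geq 0$ and verifies this by checking the function is decreasing and vanishes at $t=1$. Your integral approach is more direct and conceptual: it makes transparent \emph{why} the $\max(x,y)$ appears (it is exactly the reciprocal of the minimum of $h''$ on the relevant interval), and it generalizes immediately to any $h$ with $h''(s)\geq 1/\max(x,y)$ on $[x\wedge y,\,x\vee y]$. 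The paper's argument, by contrast, is tailored to this specific $h$ and relies on finding the right change of variables in each case. You even anticipate the paper's substitution at the end of your write-up; your primary argument is the simpler of the two.
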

\begin{proof}
The claimed inequality is equivalent to \[t \log t  \geq (t-1) + \frac{1}{2\max(1,t)}\cdot (t-1)^2\numberthis\label[ineq]{eq:initLSCineq}\] for $ t >0$. 
Suppose $t\geq 1$. Then, choosing $u = 1-1/t$, \cref{eq:initLSCineq} is equivalent to 
\[-\log (1-u) \geq u + \frac{1}{2} u^2, \] for $u \in [0, 1)$, which holds by Taylor series. Suppose $0 < t \leq 1$. Then \cref{eq:initLSCineq}  is equivalent to \[ \log t - \frac{1}{2}\left( t - \frac{1}{t}\right)\geq 0,\] which may be checked by observing that the function is decreasing and equals zero at $t=1$. This completes the proof of the claim. 
\end{proof}
\end{appendices} 
\end{document}